\pdfoutput=1
\documentclass[11pt]{article}

\usepackage[letterpaper,margin=1in]{geometry}
\usepackage{natbib}

\usepackage[utf8]{inputenc}
\usepackage[T1]{fontenc}
\usepackage{lmodern}
\usepackage{url}
\usepackage{microtype}
\usepackage{graphicx}
\usepackage{subcaption}
\usepackage{booktabs}
\usepackage{multirow}

\usepackage{colortbl}
\usepackage{xcolor}
\usepackage{adjustbox, enumitem}

\usepackage{makecell}
\usepackage{array}

\usepackage{hyperref}
\hypersetup{
  colorlinks=true,
  allcolors=blue!60!black,
  pdftitle={SAG-Sep: Sparse Augmented Graphs and Onion-Guided Search for Rounded Capacity Cut Separation},
  pdfauthor={Haoran Liu, Guanyi Wang, Yu Yang}
}

\usepackage{amsmath}
\usepackage{amsfonts}
\usepackage{amssymb}
\usepackage{mathtools}
\usepackage{amsthm}

\usepackage[capitalize,noabbrev]{cleveref}

\theoremstyle{plain}
\newtheorem{theorem}{Theorem}[section]
\newtheorem{proposition}[theorem]{Proposition}
\newtheorem{lemma}[theorem]{Lemma}

\theoremstyle{definition}

\newtheorem{assumption}[theorem]{Assumption}
\theoremstyle{remark}

\title{SAG-Sep: Sparse Augmented Graphs and Onion-Guided Search for Rounded Capacity Cut Separation}

\author{Haoran Liu\\
  Department of Industrial Systems Engineering\\
  University of Florida\\
  Florida, United States\\
  \texttt{haoran.liu@ufl.edu}
  \and
  Guanyi Wang\\
  Department of Industrial Systems Engineering and Management\\
  National University of Singapore\\
  Singapore\\
  \texttt{Guanyi.W@nus.edu.sg}
  \and
  Yu Yang\\
  Department of Industrial Systems Engineering\\
  University of Florida\\
  Florida, United States\\
  \texttt{yu.yang@ise.ufl.edu}
}

\date{}

\begin{document}

\maketitle

\begin{abstract}
  Rounded capacity cuts (RCCs) are among the most effective cuts for capacitated vehicle-routing relaxations, but exact separation is computationally prohibitive at scale. Existing heuristic and neural separators can be fast, but often fail to produce sufficiently many effective cuts within practical separation budgets. We introduce SAG-Sep, a separator that scores node membership in RCC-inducing subsets across vehicle-count levels from a fractional LP solution in a single encoder forward pass, avoiding the iterative predict-and-coarsen inference used by NeuralSEP-style separators. SAG-Sep encodes the LP solution as a Sparse Augmented Graph (SAG), adding flow-informed multi-hop edges that expose long-range routing structure without densifying attention. Beyond LP-aware sparse encoding, we observe that high-value exact RCCs are typically nested, organizing into onion-like subset families, and SAG-Sep's node scores track subset nesting depth. This motivates Onion-Sweep (OS), a training-free decoder that ranks customers by their scores, evaluates the RCC violation of every prefix of the ranking in one vectorized pass, and emits the most violated prefix of each replica together with a cross-replica family of nested proposals, converting independent node scores into structurally coordinated RCC candidates without a tuned threshold or probing band. On the NeuralSEP benchmark with $N=1000$ customer instances, SAG-Sep with OS reduces the average root gap by 17.7\% relative to the strongest NeuralSEP baseline, while the single-pass SAG-Sep separator achieves a roughly 30-fold per-iteration neural separation speedup. Our findings suggest that LP-aware sparse encoding and the nested structure of high-value cuts are structural patterns worth leveraging in neural separation more broadly.
\end{abstract}

\section{Introduction}
\label{sec:intro}

The capacitated vehicle routing problem (CVRP) is a canonical NP-hard combinatorial optimization problem and a central testbed for large-scale
routing algorithms \citep{uchoa2017new}. State-of-the-art exact
solvers are built on branch-and-cut (B\&C) or branch-price-and-cut (BPC)
frameworks \citep{lysgaard2004new,pessoa2020generic,you2025routeopt}, where
cutting planes iteratively tighten linear relaxations to improve dual bounds. Among the many valid inequalities proposed for CVRP, \emph{rounded capacity cuts} (RCCs) are particularly effective and are routinely used in both B\&C and BPC solvers \citep{lysgaard2004new,fukasawa2006robust}.

Despite their effectiveness, RCCs pose a fundamental scalability challenge. Separating violated RCCs is NP-hard \citep{Diarrassouba2017}, and exact separation formulations rapidly become computationally prohibitive as the number of customers grows \citep{Augerat1995,fukasawa2006robust,pavlikov2024exact}. Although exact methods provide strong per-iteration tightening of the linear program (LP) relaxation, they are impractical at large scales, forcing solvers to rely on heuristics. The widely used CVRPSEP heuristic \citep{lysgaard2003cvrpsep} performs well on small-to-medium instances, but on larger problems it often enters a low-throughput regime, generating few effective cuts and delivering only marginal improvements in the dual bound.

Recent work has explored learning-based alternatives. NeuralSEP \citep{kim2024neural} introduced the first neural RCC separator by training a graph neural network (GNN) to predict cut-defining customer subsets from fractional LP solutions. However, operating solely on the sparse LP-support graph limits scalability: fixed-depth message passing captures only local structure, while deeper networks suffer from over-smoothing and over-squashing \citep{li2018deeper,alon2021on,wu2021representing}. NeuralSEP addresses this with recursive graph coarsening, but this multi-stage pipeline incurs substantial training and inference overhead, restricts GPU parallelism, and requires repeated CPU–GPU synchronization. These limitations motivate a separator that captures long-range dependencies without sequential coarsening and instead enables highly parallel, hardware-efficient computation.

To address these challenges, we propose \textbf{SAG-Sep}, a single-pass RCC separator built around a \textbf{Sparse Augmented Graph} (SAG) representation of the current LP solution. SAG starts from the LP-support graph and augments it with a small number of flow-informed multi-hop edges, exposing long-range routing structure while retaining sparse attention. Given this representation, SAG-Sep scores customer membership in RCC-inducing subsets across vehicle-count levels in one encoder pass, avoiding the iterative predict-and-coarsen procedure of NeuralSEP-style separators.

The continuous vehicle-count conditioning also creates a richer test-time search problem. Aggressive probing of the frozen network showed that it contains useful candidate cuts beyond the hard-threshold subset of each replica, but such probing treats the replicas as independent predictions. Studying the exact RCC separator revealed a consistent pattern: high-value exact cuts often form nested \emph{onion} families, providing cut diversity through ordered subset growth rather than unrelated threshold decisions. This observation leads to \textbf{Onion-Sweep (OS)}, a training-free decoder that reads subsets directly off the score ranking instead of thresholding it: within each replica, OS evaluates the RCC violation of every prefix of the score ranking in one vectorized pass and emits the most violated one; across replicas, it aggregates the scores into a global priority ordering and emits the nested prefixes at which the required vehicle count increases. No tuned threshold, band width, or sampling is involved, and the trained encoder is unchanged.

\subsection{Contributions}
We summarize our contributions as follows:
\begin{itemize}
\item We introduce \textbf{SAG-Sep}, a one-shot RCC separator based on a \textbf{Sparse Augmented Graph} encoding of the fractional LP solution. The SAG representation preserves the sparsity of the LP-support graph while adding flow-informed multi-hop edges, enabling the model to capture long-range routing dependencies without dense attention or NeuralSEP-style iterative coarsening. This design substantially improves GPU efficiency and sharply reduces neural separation time while retaining strong root-node bounds.

\item We study test-time enhancement in the learned probability space: aggressive conditioned probing, which densifies the vehicle-count conditioning grid, improves root bounds and reveals that the trained network contains useful cuts beyond those exposed by hard-thresholded replicas.

\item We further analyze the learned score space together with exact RCC separation and identify a stable nested, onion-like structure among high-value exact cuts. This motivates \textbf{Onion-Sweep (OS)}, a training-free decoder that replaces hard thresholding by a violation-guided sweep over the score ranking within each replica and by onion-guided nested proposals across replicas, removing the threshold dependence of independently thresholded outputs and substantially improving separation effectiveness.

\item More broadly, the design illustrates how a neural module can take over an NP-hard separation subroutine at the root node of an exact solver, accelerating relaxation tightening without sacrificing the validity of generated cuts.
\end{itemize}

\subsection{Related Work}
\label{sec:related}

\paragraph{End-to-end learning for CVRP.}
A substantial body of work applies machine learning to CVRP to produce high-quality \emph{primal} solutions: neural construction heuristics formulated routing as sequence generation with pointer networks \citep{vinyals2015pointer}, strengthened by reinforcement learning \citep{Nazari2018}, attention-based architectures \citep{Kool2019}, and symmetry- and multi-start-based training such as POMO \citep{Kwon2020pomo} and Sym-NCO \citep{kim2022sym}.

Recent methods scale to very large instances through decomposition or divide-and-conquer \citep{li2021learning,ye2024glop,zhengUDCUnifiedNeural}, or \emph{improve} incumbents through learned local search \citep{lu2019learning,xin2021neurolkh,ma2023learning}. Although often highly competitive on large benchmarks, these neural heuristics operate purely in the primal space and typically provide no guarantees on solution quality.

\paragraph{Learning in exact solvers.}
A complementary direction incorporates learning directly into \emph{exact} solvers, leveraging solver states such as LP solutions and dual prices within B\&C or BPC frameworks while preserving optimality guarantees \citep{bengio2021machine}. A major line of work focuses on \emph{learning to branch} \citep{khalil2016learning, alvarez2017machine, gasse2019exact}, with extensions to more general branching schemes \citep{yang2021multivariable, yang2022learning} and multi-stage procedures \citep{you2023two}. Learning has also been explored for \emph{column generation}, where models propose or prioritize promising columns to reduce pricing effort \citep{zhang2022learning, Morabit2024}, prefill the master with learned column pools \citep{liu2026imperfect}, or stabilize the dual iterates with learned dual-optimal inequalities \citep{you2026learned}.

In contrast, \emph{learning for cutting planes} remains underexplored: most work addresses \emph{cut selection} from large candidate pools \citep{tang2020reinforcement, huang2022learning}, whereas learning-based \emph{cut generation}, directly proposing new valid inequalities, is much less studied, particularly when exact separation is intractable. We target this gap by predicting RCC-defining subsets directly from fractional LP solutions, extending recent neural separation approaches \citep{kim2024neural}.

\section{Preliminaries}
\label{sec:prelim}

\subsection{The Capacitated Vehicle Routing Problem (CVRP)}
\label{sec:prelim-cvrp}

Let $G=(V,E)$ be a complete undirected graph with depot $0$ and customer set
$V^+ := V\setminus\{0\}$. Each customer $i\in V^+$ has integer demand $d_i\in\mathbb{Z}_{>0}$ and all
vehicles have integer capacity $C\in\mathbb{Z}_{>0}$, with $d_i\le C$ and $d_0=0$. Let $c_{ij}\ge 0$ denote the travel cost of edge
$\{i,j\}\in E$. The CVRP seeks a set of depot-rooted routes that serve every customer exactly once, incurring the minimum possible total travel cost.

A classical two-index formulation uses edge variables $x_{ij}$, with $x_{ij}\in\{0,1\}$
for customer--customer edges and $x_{0i}\in\{0,1,2\}$ for depot
edges (allowing two traversals in undirected formulations). The degree constraints $\sum_{j\in V\setminus\{i\}} x_{ij} = 2,\ \forall i\in V^+$ ensure that each customer has exactly two incident edges, and hence is visited exactly once. The objective is $\min \sum_{\{i,j\}\in E} c_{ij}\, x_{ij}$.  To tighten the relaxation and eliminate infeasible subtours, one can iteratively add capacity inequalities. For any subset $S\subseteq V^+$, define the cutset $\delta(S) := \{\{i,j\}\in E : i\in S,\, j\in V\setminus S\}$ and its load $x(\delta(S)) := \sum_{\{i,j\}\in \delta(S)} x_{ij}$. The (strong) capacity
cut is
\[
  x(\delta(S)) \;\ge\; 2\, r(S),
  \qquad \forall S\subseteq V^+,\, |S|\ge 2,
\]
where $r(S)$ denotes the minimum number of vehicles required to serve the
customers in $S$.

Since computing $r(S)$ is NP-hard, solvers typically use the rounded lower bound $r(S)\ge \left\lceil d(S)/C \right\rceil$, where $d(S):=\sum_{i\in S} d_i$, yielding the \emph{rounded capacity cut} (RCC):
\begin{equation*}
  x(\delta(S)) \;\ge\; 2\left\lceil \frac{d(S)}{C} \right\rceil,
  \qquad \forall S\subseteq V^+,\, |S|\ge 2.
\end{equation*}

Because RCCs form an exponential family, solvers generate them on demand: they repeatedly solve the current LP relaxation and invoke separation routines to detect violated RCCs, which are added until the relaxation is sufficiently tight or further bound improvements become marginal.

\subsection{Exact Separation of RCCs}
\label{sec:prelim-rccsep}

Given a fractional LP solution $\bar x\in\mathbb{R}_+^E$, RCC separation aims to identify a customer subset $S\subseteq V^+$ such that the capacity inequality is violated, i.e., $\bar x(\delta(S)) < 2\left\lceil d(S)/C \right\rceil$. Exact separation seeks the subset with the maximum violation.

Following the classical formulation of \citet{fukasawa2006robust}, we enumerate candidate right-hand sides by fixing the number of required vehicles. For each $m\in\{0,\dots,\lceil d(V^+)/C \rceil - 1\}$, we solve a mixed-integer program that searches for a subset $S$ whose total demand exceeds $mC$ (thus requiring at least $m{+}1$ vehicles) while minimizing the crossing value $\bar x(\delta(S))$. Intuitively, this formulation looks for a ``light'' cut in the fractional solution that separates a demand-heavy customer group.

Let $y_i\in\{0,1\}$ indicate whether customer $i$ belongs to $S$ ($y_i=1$ iff $i\in S$), and let $\mu_{ij}\ge 0$ indicate whether edge $\{i,j\}$ crosses the induced cut. The separation problem is:
\begin{align*}
\min_{y,\mu} \;\; & z(m) \;=\; \sum_{\{i,j\}\in E} \bar x_{ij}\, \mu_{ij}  \\
\text{s.t.}\;\; & \mu_{ij} \;\ge\; y_i - y_j, && \forall \{i,j\}\in E, \\
& \mu_{ij} \;\ge\; y_j - y_i, && \forall \{i,j\}\in E, \\
& mC + 1 \;\le\; \sum_{i\in V^+} d_i\, y_i, \;\, &&  \\
& y_0 \;=\; 0, \\
& y_i \in \{0,1\}, && \forall i\in V^+, \\
& \mu_{ij} \;\ge\; 0, && \forall \{i,j\}\in E.
\end{align*}
Let $(y^*,\mu^*)$ be an optimal solution and define $S(m)=\{i\in V^+ : y_i^*=1\}$. The corresponding RCC has a right-hand side of at least $2(m+1)$ and leads to a violation whenever $z(m) < 2(m+1)$. This violated inequality is added to the LP relaxation. We further discuss this exact separation formulation and its variants in Appendix~\ref{sec:two_exact_sep}.

\section{Algorithmic Design of SAG-Sep}

\subsection{Pipeline Overview}
\label{sec:pipeline}

Figure~\ref{fig:pipeline} summarizes the cut-generation pipeline of \textbf{SAG-Sep}
within a single cutting-plane iteration. Given the current fractional LP solution
$\bar x$, we first construct the LP support graph $G_{\mathrm{LP}}$ and the
corresponding sparse augmented graph $\widetilde G$ as defined in
\Cref{sec:aug-lp-graph}. The pipeline further integrates with \textbf{Onion-Sweep (OS)},
a training-free decoder introduced in \Cref{sec:aggressive-sampling}.

Overall, we use graph replication to produce $J$ augmented graphs $\left\{\widetilde G^{(j)}\right\}_{j=1}^J$ from $\widetilde G$, and process this batch in a single encoder forward pass. The model returns node probabilities $\left\{p_i^{(j)}\right\}_{i\in V^+}$, where $p_i^{(j)}$ estimates whether customer $i$ belongs to an RCC-defining subset. Candidate subsets $S\subseteq V^+$ are then constructed either by simple thresholding or by the more structured OS decoder.

Each candidate subset is translated into a valid RCC by computing the rounded bound $\left\lceil d(S)/C \right\rceil$ and evaluating its violation under $\bar x$, i.e., the difference $2\left\lceil d(S)/C \right\rceil- \bar x(\delta(S))$. Any violated RCC is added to the LP relaxation, which is then re-optimized to obtain an updated solution $\bar x$. This \emph{separate–add–resolve} loop continues until either (i) no violated RCC is identified, or (ii) a prescribed time or iteration budget is exhausted.

\begin{figure}[htbp]
  \centering
  \includegraphics[width=0.8\linewidth]{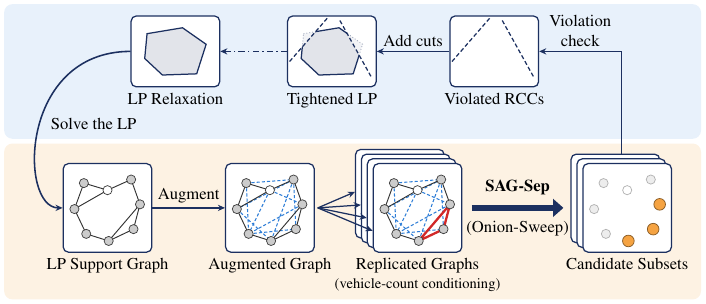}
  \caption{Pipeline overview of SAG-Sep for RCC separation. The current LP solution is encoded as a sparse augmented graph, replicated and evaluated in one batched forward pass, and converted into candidate subsets by thresholding or Onion-Sweep. Violated RCCs are added back to the LP relaxation in a separate--add--resolve loop.}
  \label{fig:pipeline}
\end{figure}

\subsection{Augmented LP Support Graph}
\label{sec:aug-lp-graph}
We first construct a sparse \emph{LP support graph} and then augment it with a small set of informative multi-hop edges, yielding an augmented graph $\widetilde G$ on which SAG-Sep performs masked self-attention.

\paragraph{LP support graph.}
Following the same notation, we define the LP support graph
\[
G_{\mathrm{LP}}=(V,E_{\mathrm{LP}}), \qquad
E_{\mathrm{LP}} := \{\{i,j\}\in E : \bar x_{ij} \ge \tau_1\},
\]
where $\tau_1>0$ is a small threshold. This filtering step retains only edges that carry nontrivial LP mass, resulting in a sparse graph that reflects the dominant structure of the current LP relaxation.

\paragraph{$K$-hop reachability.}
For an integer $K\ge 2$, let $\mathrm{dist}_{\mathrm{LP}}(i,j)$ denote the shortest-path
distance (i.e., the number of edges) from $i$ to $j$ in $G_{\mathrm{LP}}$, with $+\infty$ if $i$ and $j$ are not connected. Define the $K$-hop neighborhood as
\[
\mathcal{N}_K(i) := \{ j\in V\setminus\{i\}:~ \mathrm{dist}_{\mathrm{LP}}(i,j)\le K \}.
\]

\paragraph{Multi-hop LP connectivity score.}
For any $j\in\mathcal{N}_K(i)$, define the connectivity score as
\[
e^{\mathrm{flow}}_{ij}
:= \max_{p:i\leadsto j,\ |p| = \mathrm{dist}_{\mathrm{LP}}(i,j) \le K}\ \min_{(u,v)\in p}\ \bar x_{uv},
\]
where the maximum is taken over all shortest paths $p$ connecting $i$ with $j$ of length at most $K$. This is the largest bottleneck value among the shortest LP-support paths between the two nodes, a hop-limited variant of the classical \emph{widest-path} value. By construction, $e^{\mathrm{flow}}_{ij}=\bar x_{ij}$ when $\{i,j\}\in E_{\mathrm{LP}}$ (via a 1-hop path), while larger values of $e^{\mathrm{flow}}_{ij}$ indicate that $i$ and $j$ are linked through a short path whose edges all take large LP values in $\bar x$.

\paragraph{Augmented graph construction.}
We define the augmented edge set as
\[
  \widetilde E \;:=\; E_{\mathrm{LP}} \;\cup\; E_{\mathrm{syn}},
\]
where $E_{\mathrm{syn}}$ contains \emph{synthetic} edges that connect multi-hop neighbors:
\[
  E_{\mathrm{syn}}
  :=
  \{\{i,j\}: i,j\in V,\ j\in\mathcal{N}_K(i),\ \{i,j\}\notin E_{\mathrm{LP}} \}.
\]
$G_{\mathrm{LP}}$ is typically sparse; with small $K$, the edge growth under the $K$-hop closure remains controlled.

\begin{proposition}
\label{prop:ER_blowup}
Let $N=|V|$. Assume the undirected LP-support graph $G_{\mathrm{LP}}=(V,E_{\mathrm{LP}})$ is sampled from the Erdős-Rényi model $\mathcal{G}(N,\rho)$.
Let $\widetilde G=(V,\widetilde E)$ denote the $K$-hop closure of $G_{\mathrm{LP}}$.
For fixed $K\ge 1$, let $\lambda_N=\rho(N-1)$ and assume $\lambda_N^K=o(N)$. Then
\begin{equation*}
    \frac{\mathbb{E}[|\widetilde E|]}{\mathbb{E}[|E_{\mathrm{LP}}|]}
    =(1+o(1))\sum_{k=0}^{K-1}\lambda_N^k.
\end{equation*}
In particular, if $\lambda_N\to\lambda$, the ratio converges to $\sum_{k=0}^{K-1}\lambda^k$.
\end{proposition}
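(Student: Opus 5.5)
The plan is to reduce both expectations to single-pair probabilities by symmetry and linearity. Since $\widetilde E$ is the $K$-hop closure, $\{i,j\}\in\widetilde E$ iff $\mathrm{dist}_{\mathrm{LP}}(i,j)\le K$. By exchangeability of vertices in $\mathcal G(N,\rho)$,
\[
\mathbb E|E_{\mathrm{LP}}|=\binom N2\rho,\qquad \mathbb E|\widetilde E|=\binom N2\,\mathbb P\bigl(\mathrm{dist}(1,2)\le K\bigr),
\]
so the ratio equals $\mathbb P(\mathrm{dist}(1,2)\le K)/\rho$. It therefore suffices to show
\[
\mathbb P(\mathrm{dist}(1,2)\le K)=(1+o(1))\,\rho\sum_{k=0}^{K-1}\lambda_N^k .
\]

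Let $X=\sum_{k=1}^K X_k$, where $X_k$ counts self-avoiding paths of length $k$ from $1$ to $2$. Then $\{\mathrm{dist}\le K\}=\{X\ge1\}$. A length-$k$ path is determined by an ordered choice of $k-1$ distinct intermediate vertices, so
\[
\mathbb E X_k=(N-2)_{k-1}\rho^k .
\]
Because $K$ is fixed, $(N-2)_{k-1}=(1+o(1))(N-1)^{k-1}$. Hence $\mathbb E X_k=(1+o(1))\rho\lambda_N^{k-1}$, and $\mathbb E X=(1+o(1))\rho\sum_{k=0}^{K-1}\lambda_N^k$. The union bound $\mathbb P(X\ge1)\le\mathbb E X$ gives the upper bound directly.

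For the matching lower bound I will use Bonferroni, $\mathbb P(X\ge1)\ge \mathbb E X-\tfrac12\mathbb E[X(X-1)]$, and show $\mathbb E[X(X-1)]=o(\mathbb E X)$; this is the main obstacle. $\mathbb E[X(X-1)]$ sums $\rho^{|P\cup Q|}$ over ordered pairs of distinct $1$–$2$ paths $P,Q$ of length at most $K$. The union $H=P\cup Q$ is connected, contains both endpoints, and has a cycle. I will classify pairs by the isomorphism type of $H$ with its two marked endpoints; there are $O_K(1)$ types. If $H$ has $v$ vertices and $e$ edges, including the two fixed endpoints, then $e\ge v$, and the type contributes at most $N^{v-2}\rho^{e}$. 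Writing $\rho=\lambda_N/(N-1)$, this is $O\bigl(\lambda_N^{e}N^{v-2-e}\bigr)$. The most dangerous types have $e=v$, contributing $O(\lambda_N^{e}/N^{2})$.

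Each such term is compared with $\mathbb E X\ge\mathbb E X_1=\rho\asymp\lambda_N/N$, giving a ratio $O(\lambda_N^{e-1}/N)$ with $e\le 2K$. Here I need care. If $\lambda_N$ is bounded, this is $O(1/N)$. If $\lambda_N\to\infty$, I would instead compare against the dominant term $\mathbb E X_K\asymp\lambda_N^K/N$, using the finer estimate $e\le |P|+|Q|-1$ so that one path's contribution is paired with the other's. The target is to bound the ratio by $O(\lambda_N^K/N)+O(1/N)$, which is $o(1)$ exactly under $\lambda_N^K=o(N)$.

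Combining the two bounds yields $\mathbb P(X\ge1)=(1+o(1))\,\mathbb E X$, which proves the displayed ratio. The final claim then follows by continuity of the finite sum $\sum_{k=0}^{K-1}\lambda^k$ in $\lambda$. In that case $\lambda_N^K=o(N)$ holds automatically.
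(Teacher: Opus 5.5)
Your proposal is correct and follows essentially the same route as the paper: reduce by symmetry to $\mathbb{P}(\mathrm{dist}\le K)/\rho$, take the first-moment upper bound, and get a Bonferroni lower bound in which each union type of two distinct paths contributes $O(\lambda_N^{e}/N^{2})$ (the paper splits off the edge-disjoint pairs and bounds the whole pair term by $O(\mu_N^2)=o(\mu_N)$ using $\mu_N=\mathbb{E}X=o(1)$). One small correction: your ``finer estimate'' $e\le |P|+|Q|-1$ fails for edge-disjoint pairs, but you do not need it, because $e\le 2K$ already bounds each ratio by $O(\max\{1,\lambda_N^K\}/N)=o(1)$ once you split on $\lambda_N\le 1$ versus $\lambda_N>1$; that split also replaces your bounded-versus-$\lambda_N\to\infty$ dichotomy, which does not cover all sequences.
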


The proposition (proof in Appendix~\ref{app:proof-er-blowup}) quantifies why the hop radius must stay small: under this model the expected edge count grows geometrically in $K$ through the mean LP-support degree $\lambda$, so a radius of two keeps the expected $|\widetilde E|$, which dominates memory and attention cost, within a factor of about $1+\lambda$ of $|E_{\mathrm{LP}}|$.

Each edge $\{i,j\}\in\widetilde E$ is assigned an input feature vector
\[
  e^{\mathrm{input}}_{ij}
  :=
  \big[\, e^{\mathrm{flow}}_{ij},\ \mathrm{dist}_{\mathrm{LP}}(i,j)\, \big].
\]
The depot is also joined to every customer; missing depot edges get the feature vector $(0,0)$.
Each node $i \in V$ is assigned a feature vector following NeuralSEP~\citep{kim2024neural}:
\[
h_i^{\text{input}} = \left[\, \frac{d_i}{C}, \frac{m}{\max(M-1,1)} \,\right],
\]
where $M=\left\lceil d(V^+)/C \right\rceil$ is the capacity-based lower bound on the number of vehicles required to serve all customers.
To encode vehicle-index information, we create $J=M$ replicas of $\widetilde G$ indexed by $m\in\{0,1,\dots,M-1\}$.
Within replica $m$, all nodes share the same graph-level conditioning value $m/\max(M-1,1)$, so the conditioning values span $[0,1]$.
In contrast, the demand feature $d_i/C$ and all edge features remain unchanged across the $M$ replicas originating from the same $\widetilde G$.

\paragraph{Shared edge encoding.}
The replicated graphs have identical edge structure and edge features and differ only in their conditioning value. At large scales $\widetilde E$ is often much larger than $V$, so duplicating it across all $M$ replicas creates substantial edge-encoding and attention-memory pressure. We therefore encode edge features once, precompute the edge attention bias once per layer, and stack only the node features across replicas, which reduces the edge-side memory from $O(M|\widetilde E|D)$ to $O(|\widetilde E|D)$ for hidden dimension $D$ compared with materializing $M$ graph copies; the replica-dependent attention weights still take $O(M|\widetilde E|H)$ memory, with $H\ll D$ heads. The two effects are complementary: Proposition~\ref{prop:ER_blowup} controls $|\widetilde E|$ itself, while the shared encoding removes the factor $M$ (details in Appendix~\ref{sec:shared_edge_encoding}).

\subsection{Onion-Sweep Decoding}
\label{sec:aggressive-sampling}

\paragraph{From thresholds to violation-guided decoding.}
The basic readout turns the probabilities of replica $j$ into one subset by hard thresholding, $\hat S^{(j)}=\{i\in V^+: p_i^{(j)}\ge\tau\}$, which discards most of the information in the scores: whether a subset induces a violated RCC is decided by the LP solution, not by a fixed cutoff, and a few nodes near $\tau$ can flip the induced cut. \textbf{Onion-Sweep (OS)} instead treats each replica's score ranking as an ordered family of nested subsets and selects among them by their actual violation, through a per-replica \emph{sweep} and a cross-replica \emph{onion} step; both reuse the encoder probabilities and require no additional forward pass.

\begin{center}
\fbox{\begin{minipage}{0.96\linewidth}
\small
\textbf{Onion-Sweep candidate generation.}
Input: probabilities $\{p^{(j)}\}_{j=1}^J$, demands $d$, capacity $C$, current LP solution $\bar x$, and the threshold $\tau$ (used only as a fallback).
\begin{enumerate}[leftmargin=1.5em,itemsep=1pt,topsep=2pt]
\item Initialize candidate pool $\mathcal C\leftarrow\emptyset$.
\item \textbf{Per-replica sweep.} For each replica $j$, sort customers by decreasing $p_i^{(j)}$ into $v_1,\ldots,v_{|V^+|}$ and consider the nested prefixes $P_r=\{v_1,\ldots,v_r\}$. Compute the violation $\nu(P_r)=2\lceil d(P_r)/C\rceil-\bar x(\delta(P_r))$ of every prefix in one pass, using $\bar x(\delta(P_r))=\bar x(\delta(P_{r-1}))+\deg(v_r)-2\sum_{t<r}\bar x_{v_tv_r}$ with $\deg(i)=\sum_{k\in V}\bar x_{ik}$, and accumulating $d(P_r)$ likewise. Let $r^\star$ be the first maximizer of $\nu(P_r)$ over $r\ge 2$. If $\nu(P_{r^\star})>\varepsilon$, add $P_{r^\star}$ to $\mathcal C$; otherwise add the hard-threshold subset $\hat S^{(j)}$.
\item \textbf{Cross-replica onion.} Compute the soft frequency score $a_i=\frac{1}{J}\sum_{j=1}^J p_i^{(j)}$, sort customers as $v_1,\ldots,v_{|V^+|}$ by decreasing $a_i$, form nested prefixes $P_k=\{v_1,\ldots,v_k\}$, and add $P_k$ to $\mathcal C$ whenever $r_k=\lceil d(P_k)/C\rceil$ reaches a new vehicle-count level.
\item \textbf{Finalize subsets.} Remove duplicates and subsets whose RCC is already in the LP, and keep a proposed subset $S$ only if $2\lceil d(S)/C\rceil-\bar x(\delta(S))>0$.
\end{enumerate}
Output: violated RCC-defining subsets in $\mathcal C$.
\end{minipage}}
\end{center}

\paragraph{Violation-guided sweep.}
The sweep replaces the single cutoff $\tau$ by an exhaustive, deterministic evaluation of the $|V^+|$ nested prefixes induced by the ranking. Because $\bar x(\delta(P_r))$ and $d(P_r)$ are updated incrementally, all prefix violations of a replica are obtained in one vectorized pass over the LP-support adjacency, at a cost that is negligible compared with the subsequent LP re-optimization. The threshold subset $\hat S^{(j)}$ is itself a prefix of the same ranking, so the sweep never returns a less violated subset than the threshold readout, and the fallback fires only when no prefix of size at least two is violated; the fallback subset is then itself a non-violated prefix and is removed by the final filter, so up to the tolerance $\varepsilon$ the output of OS does not depend on $\tau$. OS has no band width, probing strategy, or random component; the tolerance $\varepsilon=10^{-6}$ and the minimum subset size of two are fixed throughout.

\paragraph{Onion-guided nested proposals.}
The sweep still treats each replica independently. Exact RCC separation suggests a more structured view: when exact cuts are ordered by their required vehicle count, high-value subsets often form nested \emph{onion} families (the banded exact separator of Appendix~\ref{sec:two_exact_sep} reflects this one-cut-per-demand-band structure, and Appendix~\ref{subsec:onion_structure_analysis} quantifies the nesting of exact labels and its alignment with the encoder's scores). The onion step uses the cross-replica soft frequency score as a global priority ordering, so high-priority customers enter many nested candidates while lower-priority customers enter only in larger prefixes, converting independent node scores into a coordinated family of nested RCC proposals, one per vehicle-count level; onion prefixes with violation below $10^{-4}$ are discarded before the final filter.

\paragraph{Relation to margin-band probing.}
Our earlier decoder, Onion-MBP (Appendix~\ref{sec:hyperparameter_selection}), kept the hard threshold and explored a margin band of width $\Delta$ around it with four fixed probes, combined with the same onion step. OS removes the dependence on $\tau$ and $\Delta$: instead of perturbing nodes near $\tau$, it evaluates the exact violation of every prefix of the ranking; since two of the probes are not prefixes, OS does not contain Onion-MBP as a special case. Onion-MBP is kept as an ablation in Section~\ref{sec:test_time_refinement_results}.

\section{Experiments}
In this section, we organize the empirical evaluation around three main claims:
\begin{enumerate}[label=(\roman*)]
\itemsep 0em
\item \textbf{Single-pass competitiveness.} SAG-Sep matches, and can even outperform, recursive coarsening pipelines while using a deterministic single-pass separator.

\item \textbf{Sparse augmented graph design.} Structured graph augmentation components drive the separation improvement.

\item \textbf{Structured test-time decoding.} Onion-Sweep further strengthens SAG-Sep by exploiting the violation profile of the score ranking and cross-replica nested structure.
\end{enumerate}

\subsection{Experimental Setup}
\label{sec:exp}
We evaluate SAG-Sep against representative exact, heuristic, and learning-based RCC separators. The exact baseline is the classical exact RCC separator of \citet{fukasawa2006robust}; CVRPSEP \citep{lysgaard2004new,you2025routeopt} is run with only its RCC routines enabled; NeuralSEP$_1$ \citep{kim2024neural} is the original recursive-coarsening neural separator, while NeuralSEP$_2$ \citep{sim2025test} is its upgraded pipeline. For the NeuralSEP family, our main comparison focuses on NeuralSEP$_2$ and its stochastic coarsening policy $\pi$ and GraphChip (GC) variants (details in Appendix~\ref{sec:moredisc}; exact separation in Appendix~\ref{sec:two_exact_sep}).
All methods are evaluated at the root node of the cutting-plane procedure on the NeuralSEP benchmark testing instances, using the CVRP formulation described in \Cref{sec:prelim-cvrp}.

All main-table results use a single AMD EPYC 9655P CPU thread for LP solving, a single NVIDIA L4 GPU for neural inference, and a 3600-second root-relaxation time limit, checked between separation rounds (so times can exceed 3600 s); no main result uses multi-GPU inference or multi-threaded LP solving. Code, the trained checkpoint, and the benchmark instances are available at \url{https://github.com/henryleo180/SAG-Sep}. Training details are provided in Appendix~\ref{sec:training_detail}; additional ablations over hop radius, threshold choices, numerical precision, and test-time settings are in Appendix~\ref{sec:moredisc}.

\subsection{Single-Pass Separation vs.\ Recursive Coarsening}
\label{sec:single_vs_recursive}

Table~\ref{tab:cvrp_3panel_iters_with_exact} reports the main root-node comparison: the mean optimality gap $100\,(U-L)/U$ of the final root bound $L$ to the best-known solution value $U$ (BKS), and the total root-node runtime in seconds under the single-thread LP and single-GPU setting.

On small and medium instances, CVRPSEP remains hard to beat: it encodes decades of CVRP-specific separation heuristics in a tightly engineered package. Recursive NeuralSEP pipelines also obtain good gaps, but the improvement over SAG-Sep is modest relative to their runtime cost. At $N=200$, for example, NeuralSEP$_2$ improves the gap by $0.79$ percentage points (pp) over SAG-Sep but is about $7.9\times$ slower; the GC variant improves the gap by $1.41$ pp but is about $12.3\times$ slower.

As the instance size grows, this tradeoff turns in favor of SAG-Sep. At $N=300$ it is faster than both recursive neural pipelines and tighter than NeuralSEP$_2$. From $N=400$ onward, SAG-Sep gives the best gaps in Table~\ref{tab:cvrp_3panel_iters_with_exact}; at $N=400$--$500$ it also has the lowest runtime, while at $N\ge 750$ every method hits the time limit. At $N=1000$, it reduces the gap from $25.82\%$ for $\pi$-NeuralSEP$_2$+GC and $28.72\%$ for CVRPSEP to $22.03\%$, with a per-round neural separation time of only $0.65$s---versus $16$--$19$s for the NeuralSEP pipelines (Appendix~\ref{sec:n1000_exact_compare}). Appendix~\ref{sec:neuralsep1_repro} further shows that SAG-Sep is much stronger than the one-shot NeuralSEP base model, indicating that the large-scale gain comes from the sparse augmented encoding and the cleaner single-pass structure, not only from a speed advantage.

SAG-Sep is not meant to replace CVRPSEP's carefully engineered small-instance heuristics; its value is a stronger learned encoding and a high-throughput separation path, which on large instances translate into better bounds under the same root limit and provide a cleaner platform for Onion-Sweep.

\begin{table*}[t]
\centering
\caption{Root-node performance on CVRP test instances. Each cell reports mean optimality gap (\%, top) and mean runtime in seconds (bottom, in parentheses). Lower is better for both metrics.}
\label{tab:cvrp_3panel_iters_with_exact}
\scriptsize
\setlength{\tabcolsep}{3.0pt}
\renewcommand{\arraystretch}{1.08}

\begin{adjustbox}{max width=\textwidth}
\begin{tabular}{l|ccccccccc}
\toprule
\textbf{Method} & \textbf{50} & \textbf{75} & \textbf{100} & \textbf{200} & \textbf{300} & \textbf{400} & \textbf{500} & \textbf{750} & \textbf{1000} \\
\midrule
Exact
& \makecell{1.98\\(55.45)}
& \makecell{2.76\\(203.97)}
& \makecell{4.52\\(671.00)}
& \makecell{6.35\\(3593.72)}
& \makecell{11.98\\(3651.60)}
& \makecell{18.89\\(3698.08)}
& \makecell{24.87\\(3718.09)}
& \makecell{41.00\\(4252.43)}
& \makecell{50.61\\(4624.86)}
\\
CVRPSEP
& \makecell{1.97\\(0.22)}
& \makecell{2.77\\(1.89)}
& \makecell{4.54\\(7.12)}
& \makecell{6.28\\(212.37)}
& \makecell{7.78\\(1800.42)}
& \makecell{11.89\\(3604.61)}
& \makecell{15.68\\(3608.27)}
& \makecell{24.82\\(3602.54)}
& \makecell{28.72\\(3609.18)}
\\
NeuralSEP$_2$
& \makecell{5.25\\(4.11)}
& \makecell{5.16\\(13.64)}
& \makecell{6.41\\(35.86)}
& \makecell{8.31\\(181.47)}
& \makecell{10.09\\(893.64)}
& \makecell{13.52\\(1948.70)}
& \makecell{14.48\\(3276.36)}
& \makecell{21.83\\(3672.58)}
& \makecell{26.09\\(3740.21)}
\\
$\pi$-NeuralSEP$_2$+GC
& \makecell{3.68\\(6.12)}
& \makecell{4.40\\(16.33)}
& \makecell{5.97\\(37.28)}
& \makecell{7.69\\(282.44)}
& \makecell{8.67\\(1833.29)}
& \makecell{10.81\\(3075.62)}
& \makecell{13.11\\(3639.13)}
& \makecell{20.65\\(3723.46)}
& \makecell{25.82\\(3842.77)}
\\
SAG-Sep
& \makecell{4.37\\(0.32)}
& \makecell{5.91\\(1.22)}
& \makecell{7.11\\(2.82)}
& \makecell{9.10\\(23.01)}
& \makecell{9.42\\(343.30)}
& \makecell{\textbf{9.98}\\\textbf{(1897.26)}}
& \makecell{\textbf{11.35}\\\textbf{(3045.45)}}
& \makecell{\textbf{17.41}\\(3650.99)}
& \makecell{\textbf{22.03}\\(3655.57)}
\\
\bottomrule
\end{tabular}
\end{adjustbox}
\end{table*}

\subsection{Augmented LP Graphs for RCC Separation}
\label{sec:augmented_graph}

We isolate the effect of graph construction by fixing the separator architecture and varying the LP-graph augmentation. Table~\ref{tab:ablation-graph-9col} reports the main ablation.

\begin{table}[t]
\centering
\caption{Ablation on augmented LP graph variants: mean optimality gap (\%), lower is better. Separately trained models from an earlier development stage (augmented variants with hop radius 3); rows compare with each other, not with Table~\ref{tab:cvrp_3panel_iters_with_exact}.}
\label{tab:ablation-graph-9col}
\scriptsize
\setlength{\tabcolsep}{3.0pt}
\renewcommand{\arraystretch}{1.08}
\begin{adjustbox}{max width=\textwidth}
\begin{tabular}{l|ccccccccc}
\toprule
\textbf{Variant} & \textbf{50} & \textbf{75} & \textbf{100} & \textbf{200} & \textbf{300} & \textbf{400} & \textbf{500} & \textbf{750} & \textbf{1000} \\
\midrule
Raw
& 7.79 & 9.50 & 9.78 & 11.95 & 12.31 & 12.81 & 14.51 & 22.00 & 27.80 \\
+Conn
& 7.88 & 9.97 & 11.22 & 14.26 & 16.19 & 19.09 & 21.33 & 24.71 & 27.64 \\
+Hop
& 6.57 & 7.63 & 8.28 & 9.93 & 10.70 & 11.40 & 12.87 & 19.65 & 24.63 \\
+Val
& \textbf{6.37} & \textbf{6.41} & \textbf{7.38} & \textbf{9.00} & \textbf{10.04} & \textbf{10.88} & \textbf{12.16} & \textbf{19.23} & \textbf{24.04} \\
\bottomrule
\end{tabular}
\end{adjustbox}
\end{table}

Simply increasing connectivity is not enough: adding zero-feature synthetic edges (\texttt{+Conn}) degrades performance relative to the raw LP-support graph at eight of nine sizes, since uninformative long-range edges introduce spurious connectivity that the model cannot disentangle from genuine LP support; \emph{how} we add connectivity matters more than \emph{how much}.

Performance improves only when the augmented edges carry interpretable structure: hop-type information (\texttt{+Hop}) distinguishes native LP edges from synthetic multi-hop connections and consistently improves over the raw graph, and flow-based edge values (\texttt{+Val}) give the best results at all sizes, reflecting that RCC violations are tied to bottleneck and flow structure. We use hop radius $2$ (Appendix~\ref{sec:hop_ablation}) and a six-layer transformer (Appendix~\ref{sec:training_detail}) by default.

\subsection{Test-Time Refinement}
\label{sec:test_time_refinement_results}

We next compare SAG-Sep with training-free test-time refinements that reuse the same trained encoder. SAG-Sep+Aggr corresponds to the dense conditioned probing strategy from Appendix~\ref{app:aggressive-sampling}, combined with cut management. SAG-Sep+OS uses the Onion-Sweep decoder of Section~\ref{sec:aggressive-sampling}. SAG-Sep+Onion-MBP, our earlier margin-band decoder, is kept as an ablation (configuration in the caption of Table~\ref{tab:test_time_refinement} and Appendix~\ref{sec:hyperparameter_selection}). Paired per-instance statistics for these comparisons are reported in Appendix~\ref{sec:paired_appendix}.

\begin{center}
\captionsetup{type=table,hypcap=false}
\caption{Effect of test-time refinements: mean optimality gap (\%, top) and mean runtime in seconds (bottom); lower is better. SAG-Sep+Onion-MBP is our earlier decoder, kept as an ablation.}
\label{tab:test_time_refinement}
\scriptsize
\setlength{\tabcolsep}{3.0pt}
\renewcommand{\arraystretch}{1.08}
\begin{adjustbox}{max width=\textwidth}
\begin{tabular}{l|ccccccccc}
\toprule
\textbf{Method} & \textbf{50} & \textbf{75} & \textbf{100} & \textbf{200} & \textbf{300} & \textbf{400} & \textbf{500} & \textbf{750} & \textbf{1000} \\
\midrule
SAG-Sep
& \makecell{4.37\\(0.32)}
& \makecell{5.91\\(1.22)}
& \makecell{7.11\\(2.82)}
& \makecell{9.10\\(23.01)}
& \makecell{9.42\\(343.30)}
& \makecell{9.98\\(1897.26)}
& \makecell{11.35\\(3045.45)}
& \makecell{17.41\\(3650.99)}
& \makecell{22.03\\(3655.57)}
\\
SAG-Sep+Aggr
& \makecell{3.27\\(0.82)}
& \makecell{4.54\\(3.73)}
& \makecell{5.88\\(9.14)}
& \makecell{8.07\\(52.85)}
& \makecell{8.91\\(411.19)}
& \makecell{9.60\\(1863.35)}
& \makecell{10.77\\(3045.63)}
& \makecell{\textbf{16.99}\\(3619.31)}
& \makecell{21.70\\(3635.41)}
\\
SAG-Sep+Onion-MBP
& \makecell{3.47\\\textbf{(0.24)}}
& \makecell{4.18\\\textbf{(0.88)}}
& \makecell{5.90\\(2.46)}
& \makecell{8.01\\(27.50)}
& \makecell{8.85\\\textbf{(299.47)}}
& \makecell{9.41\\\textbf{(1523.99)}}
& \makecell{\textbf{10.66}\\\textbf{(2897.77)}}
& \makecell{17.24\\(3665.06)}
& \makecell{21.26\\(3705.87)}
\\
SAG-Sep+OS
& \makecell{\textbf{2.60}\\(0.29)}
& \makecell{\textbf{3.80}\\(0.95)}
& \makecell{\textbf{5.43}\\\textbf{(2.37)}}
& \makecell{\textbf{7.62}\\\textbf{(22.67)}}
& \makecell{\textbf{8.56}\\(400.33)}
& \makecell{\textbf{9.19}\\(2031.73)}
& \makecell{10.78\\(3105.12)}
& \makecell{17.02\\(3633.69)}
& \makecell{\textbf{21.25}\\(3673.29)}
\\
\bottomrule
\end{tabular}
\end{adjustbox}
\end{center}

\textbf{Aggressive sampling is useful but time-consuming.} It improves the bounds at all sizes, confirming that denser conditioning grids expose additional violated cuts, but encoding the additional replicas is costly on small instances; on large ones the LP loop dominates and this cost becomes marginal.

\textbf{Onion-Sweep gives the best bound at most sizes and needs no extra forward pass.} Replacing the threshold readout by OS improves the mean root bound at every size and on 85 of the 90 benchmark instances (Appendix~\ref{sec:paired_appendix}): by $1.5$--$2.1$ pp at $N\le 200$ at equal or lower runtime, and by $0.4$--$0.9$ pp at $N\ge 300$, where at the time-limit-bound sizes $N\ge 500$ repeated runs of one configuration differ by up to $0.4$ pp (Appendix~\ref{sec:hyperparameter_selection}). Compared with Onion-MBP, OS removes the threshold, band-width, and cut-management dependence while improving the bound by $0.2$--$0.9$ pp at $N\le 400$ (71 of 90 paired wins overall); at $N\ge 500$ the two are within $0.25$ pp of each other in either direction. Because OS proposes more cuts per round without cut management, its root runtime at $N=300$--$500$ exceeds the threshold readout and the cut-managed Onion-MBP. This also helps explain why CVRPSEP remains competitive on small instances: rapidly producing many diverse cuts is itself a powerful tightening mechanism (Appendix~\ref{sec:two_exact_sep}).

\section{Limitations}
\label{sec:limitations}

\textbf{What is established, and what is not.} This paper focuses on the root node of the two-index formulation, where we set out to show how much more efficient learned separation can be: SAG-Sep with OS attains lower root gaps than learned and classical separators within the same time limit in large-scale, LP-bound regimes, where LP re-optimization dominates the root loop. At $N\le 300$ CVRPSEP remains stronger, and the effect on complete branch-and-cut or branch-price-and-cut solves is only preliminarily tested (Appendix~\ref{sec:code_release}). Beyond the root the LP states change character: in exact-separation branch-and-cut runs on our $N=50$--$100$ instances, 86\% of the cuts found in the first tenth of the root rounds are capacity-dominated, none in the second half of the root loop or at tree nodes, and the mean violation falls from 7.9 to 0.13 (Appendix~\ref{sec:rcc_decomposition}); in a route-based master with capacity-feasible closed depot routes, $\bar x(\delta(S))\ge 2d(S)/C$ holds, so only the rounding part can ever be violated. Our labels and decoder target the batch-oriented root regime; the small, rounding-driven violations beyond it need an additional search layer, which we release but do not claim here.

\textbf{What transfers.} The separator consumes only the LP support graph, the demands, and the capacity, so the encoder and decoder plug, without architectural change, into any relaxation that exposes edge values, the host formulation supplying the valid inequality and its violation check: the nodes of a branch-and-cut tree, the root of a branch-price-and-cut solver (where the released integration fine-tunes the checkpoint on column-generation states), and the VRP variants that keep a single capacity (e.g., time windows and duration limits). We regard this component as the reusable part of the work, and the released branch-and-cut examples exercise it zero-shot against CVRPSEP (Appendix~\ref{sec:variant_bc}): on 30 capacitated, asymmetric and time-windowed instances the engine reaches exactly CVRPSEP's root bounds and both solve all 30; on the open VRP the bounds agree within 0.004 points of gap; on the capacitated minimum spanning tree problem, with the capacity cut as the only family and no CMST training, its root bound is higher on 13 of 20 instances (by up to 5.8 points at tight capacity) and lower on 6 (by at most 0.44). Linehaul--backhaul and pickup-and-delivery loads need variant-specific inequalities, and heterogeneous fleets remain open, since without a single $C$ our vehicle-count conditioning does not apply directly.

\textbf{Labels, benchmark, confounders.} Training relies on exact-separation labels, which are expensive at the training size; smaller-size labels recover much of the performance (Table~\ref{tab:training_size_ablation}), but label dependence limits fast transfer to other cut families, and we release the checkpoint and the label set. The main evaluation uses the NeuralSEP benchmark distribution, with held-out XML sizes and CVRPLIB X instances as partial mitigation (Appendix~\ref{sec:generalization_appendix}). Implementation quality matters as well: the main comparisons run our Python research implementation against the baselines' own code, so a same-language comparison with CVRPSEP is the fairest measure of speed; Appendix~\ref{sec:engine_vs_cvrpsep} runs the released C engine against the native C CVRPSEP in the same root loop, and Appendix~\ref{sec:n1000_exact_compare} gives a fixed-round comparison that removes implementation speed altogether. OS removes the threshold and band dependence of Onion-MBP but remains a hand-designed decoder; a learned decoder is future work.

\section{Conclusion}
We introduced SAG-Sep, a learning-based RCC separator built on a Sparse Augmented Graph representation of LP solutions that adds flow-informed long-range structure to the LP-support graph and enables single-pass neural separation. Exact separation revealed that high-value RCCs are often nested, onion-like, and the encoder reflects this in its scores; Onion-Sweep exploits it as a training-free decoder that tightens root bounds without extra forward passes or threshold tuning.

Beyond RCC separation, our results suggest that cut generation can be reframed as structured prediction over solver states, with attention architectures taking over expensive separation subroutines at the root node. Two directions follow: a learned decoder that reasons over candidate diversity and relaxation improvement could replace the hand-designed test-time logic, and reinforcement learning or online feedback could optimize entire cut sets rather than individual supervised labels.

\section*{Acknowledgments}
Haoran Liu and Yu Yang were supported by the National Science Foundation, Division of Civil,
Mechanical and Manufacturing Innovation, through grant CMMI-2309667. The authors declare no
competing interests.

\bibliographystyle{plainnat}
\bibliography{reference}

\clearpage
\appendix

\section*{Appendices}
\vspace{-0.5em}
\section*{Contents}
\vspace{-0.75em}
\begingroup
\small
\makeatletter
\newcommand{\appcontentssection}[3]{\@dottedtocline{1}{0em}{2.6em}{\textbf{#1\quad #2}}{\textbf{\pageref{#3}}}}
\newcommand{\appcontentssubsection}[3]{\@dottedtocline{2}{1.5em}{3.4em}{#1\quad #2}{\pageref{#3}}}
\makeatother
\appcontentssection{A}{Additional Methodological and Theoretical Details of SAG-Sep}{app:method-details}
\appcontentssubsection{A.1}{Transformer Layer}{sec:transformer-layer}
\appcontentssubsection{A.2}{Lipschitz Continuity of the Attention Module}{sec:lipschitz-proof}
\appcontentssubsection{A.3}{Aggressive Sampling via Conditioned Probing}{app:aggressive-sampling}
\appcontentssubsection{A.4}{Proof of the Resolution Error Bound}{app:proof-resolution}
\appcontentssubsection{A.5}{Proof of Sparse Augmentation Complexity}{app:proof-er-blowup}
\appcontentssection{B}{Exact RCC Separation Study and Onion Analysis}{sec:two_exact_sep}
\appcontentssubsection{B.1}{Three Exact RCC Separation Formulations}{subsec:exact_formulations}
\appcontentssubsection{B.2}{Diversity and Repetition Properties}{subsec:exact_diversity}
\appcontentssubsection{B.3}{Empirical Comparison of Exact Formulations}{subsec:exact_empirical_compare}
\appcontentssubsection{B.4}{Onion Structure in Exact Labels and Learned Scores}{subsec:onion_structure_analysis}
\appcontentssection{C}{Implementation and Training Details}{sec:training_detail}
\appcontentssubsection{C.1}{Training Procedure}{sec:training_data}
\appcontentssubsection{C.2}{Loss Functions}{sec:loss_functions}
\appcontentssubsection{C.3}{Shared Edge Encoding and Inference Memory}{sec:shared_edge_encoding}
\appcontentssubsection{C.4}{Hyperparameter Selection}{sec:hyperparameter_selection}
\appcontentssection{D}{Additional Discussions and Results}{sec:moredisc}
\appcontentssubsection{D.1}{NeuralSEP$_1$ Reproduction and One-Shot Base Model}{sec:neuralsep1_repro}
\appcontentssubsection{D.2}{Fixed-Round Comparisons at $N{=}1000$}{sec:n1000_exact_compare}
\appcontentssubsection{D.3}{fp32 vs.\ TF32 Precision Sensitivity}{sec:precision_sensitivity}
\appcontentssubsection{D.4}{Serial SAG-Sep + CVRPSEP Hybrid}{sec:cvrpsep_hybrid_appendix}
\appcontentssubsection{D.5}{Detailed $K$-hop Ablation}{sec:hop_ablation}
\appcontentssubsection{D.6}{Paired Per-Instance Comparisons}{sec:paired_appendix}
\appcontentssubsection{D.7}{Generalization Beyond the Benchmark}{sec:generalization_appendix}
\appcontentssection{E}{Code Release and Scope Beyond the Root Node}{sec:code_release}
\appcontentssubsection{E.1}{The Released Engine Run to Closure Against CVRPSEP}{sec:engine_vs_cvrpsep}
\appcontentssubsection{E.2}{Preliminary Complete Branch-and-Cut}{sec:bc_fulltree}
\appcontentssubsection{E.3}{Capacity and Rounding Parts of Exact RCC Violations}{sec:rcc_decomposition}
\appcontentssubsection{E.4}{Transfer Across Problem Classes: The Released Examples}{sec:variant_bc}
\endgroup

\section{Additional Methodological and Theoretical Details of SAG-Sep}
\label{app:method-details}

\subsection{Transformer Layer}
\label{sec:transformer-layer}

\begin{figure*}[htbp]
  \centering
  \includegraphics[width=\linewidth]{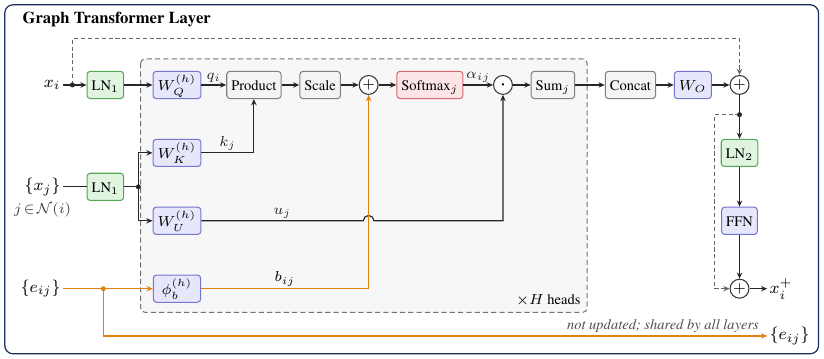}
  \caption{Graph Transformer layer with edge-biased multi-head attention (pre-normalization). Edge embeddings are encoded once, shared by all layers, and enter each head only through the scalar attention bias $b_{ij}$.}
  \label{fig:gt_layer}
\end{figure*}

The transformer layer in SAG-Sep follows the standard encoder block
\citep{vaswani2017attention} and consists of a multi-head attention (MHA) module
followed by a feed-forward network (FFN). The implementation uses residual
connections and layer normalization around these two components. This subsection
focuses on the attention module, where edge features provide an additive bias to
the scaled dot-product logits; see Figure~\ref{fig:gt_layer}.

\paragraph{Attention layer.}
Let $X\in\mathbb{R}^{n\times d}$ denote the input node embeddings of the current layer,
where $n=|V|$ and each row, denoted $x_i^\top$, corresponds to node $i$. The query, key, and value projections are
\begin{equation}
\mathbf Q = XW_Q,\quad \mathbf K = XW_K,\quad \mathbf U = XW_U,
\label{eq:qkv}
\end{equation}
where $W_Q,W_K\in\mathbb{R}^{d\times d_k}$ and $W_U\in\mathbb{R}^{d\times d_v}$ are learnable
matrices. We write $q_i^\top,k_i^\top,u_i^\top$ for the rows of $\mathbf Q,\mathbf K,\mathbf U$.

\paragraph{Edge-biased sparse attention.}
Let $\widetilde G=(V,\widetilde E)$ be the LP-induced augmented graph and
$\mathcal{N}(i)=\{j:\{i,j\}\in\widetilde E\}$ be the neighbor set of node $i$.
Attention is asymmetric (the weights $i\!\to\!j$ and $j\!\to\!i$ are computed independently), and we do not add self-loop attention edges; the residual connection at the layer level carries each node's previous representation. Every vertex has positive LP degree and the depot is joined to every customer, so no neighborhood $\mathcal{N}(i)$ is empty.
Each undirected edge $\{i,j\}\in\widetilde E$ has an encoded edge embedding $e_{ij}=e_{ji}$ obtained from the input feature $e^{\mathrm{input}}_{ij}$ in Section~\ref{sec:aug-lp-graph} by a learnable edge encoder. From this embedding we compute a scalar attention bias
\begin{equation}
b_{ij} = \phi_b(e_{ij}), \qquad \{i,j\}\in\widetilde E,
\label{eq:edge-bias}
\end{equation}
where $\phi_b(\cdot)$ is a learnable layer-specific linear map. We restrict attention to $\widetilde E$ by
normalizing over $\mathcal{N}(i)$ (equivalently, using a mask with $-\infty$ on non-edges).
In SAG-Sep, edge embeddings are used to generate layer-specific attention biases and
are not updated by the node-attention block.
The attention weights are then
\begin{equation}
\alpha_{ij}
=
\frac{
\exp\!\left(\frac{q_i^\top k_j}{\sqrt{d_k}} + b_{ij}\right)
}{
\sum\limits_{t\in\mathcal{N}(i)}
\exp\!\left(\frac{q_i^\top k_t}{\sqrt{d_k}} + b_{it}\right)
},
\qquad j\in\mathcal{N}(i),
\label{eq:edge-biased-attn}
\end{equation}
and the output is
\begin{equation}
z_i = \sum_{j\in\mathcal{N}(i)} \alpha_{ij}\, u_j,
\qquad Z = \mathrm{Attention}(X) \in \mathbb{R}^{n\times d_v}.
\label{eq:attn-out}
\end{equation}
In matrix form, letting $B\in\mathbb{R}^{n\times n}$ collect the edge biases with
$B_{ij}=\phi_b(e_{ij})$ for $j\in\mathcal{N}(i)$ and $B_{ij}=-\infty$ otherwise,
we write
\begin{equation}
\mathrm{Attention}(X)
=
\mathrm{Softmax}\!\left(\frac{\mathbf Q\mathbf K^\top}{\sqrt{d_k}} + B\right)\mathbf U,
\label{eq:attn-matrix}
\end{equation}
where $\mathrm{Softmax}(\cdot)$ is applied row-wise.

\paragraph{Multi-head attention.}
For $H$ heads, we use head-specific projections
$W_Q^{(h)},W_K^{(h)},W_U^{(h)}$ and head-specific edge-bias maps
$\phi_b^{(h)}\!:e_{ij}\mapsto b_{ij}^{(h)}$.
Each head produces $Z^{(h)}=\mathrm{Attention}^{(h)}(X)$ via
\eqref{eq:qkv}--\eqref{eq:attn-matrix}, and the MHA output is
\begin{equation}
\mathrm{MHA}(X)
=
\mathrm{Concat}\!\left(Z^{(1)},\ldots,Z^{(H)}\right)W_O,
\label{eq:mha}
\end{equation}
where $W_O\in\mathbb{R}^{(H d_v)\times d}$ is a learnable projection.

\paragraph{Residual, normalization, and FFN.}
The full layer uses a pre-normalization transformer block:
\begin{align}
    \bar X &= \mathrm{LN}_1(X), \\
    Y &= X + \mathrm{Dropout}\!\left(\mathrm{MHA}(\bar X)\right), \\
    X^+ &= Y + \mathrm{Dropout}\!\left(\mathrm{FFN}(\mathrm{LN}_2(Y))\right),
\end{align}
where $\mathrm{FFN}$ is a two-layer feed-forward network applied row-wise. The layer output is $X^+$, while the edge embeddings remain shared auxiliary features for computing attention biases.

\subsection{Lipschitz Continuity of the Attention Module}
\label{sec:lipschitz-proof}

We derive a Lipschitz bound for the edge-biased sparse attention mechanism with respect to the node features $X$. This module-level stability result supports the local stability assumption used in the conditioned-probing analysis of Appendix~\ref{app:aggressive-sampling}, which guided the sampling budget of the aggressive-sampling baseline; the Onion-Sweep decoder does not rely on it, so we only sketch the argument. We use the spectral norm $\|\cdot\|_2$ for matrices and the Euclidean norm for vectors.

\begin{assumption}[Bounded Input]
\label{ass:bounded-input}
We assume the input node features lie in a bounded set, i.e., $\|X\|_2 \le C_X$ for some constant $C_X > 0$. This local boundedness assumption is standard in self-attention analyses and is stabilized in practice by layer normalization and finite inference activations.
\end{assumption}

\begin{lemma}[Lipschitz Bound for a Single Attention Head]
\label{lem:single-head-lip}
Under Assumption~\ref{ass:bounded-input}, consider the single-head attention function $f(X): \mathbb{R}^{n\times d} \to \mathbb{R}^{n\times d_v}$ defined in Eq.~\eqref{eq:attn-matrix}. Let $L_\sigma$ be a Lipschitz constant of the row-wise Softmax map on the bounded logit set induced by $\|X\|_2\le C_X$, measured in spectral norm. Let
\begin{equation}
  C_S := \sup_X \|S(X)\|_2,
\end{equation}
where $S(X)=\mathrm{Softmax}(A(X))$ is the masked attention matrix and the supremum is taken over $\|X\|_2\le C_X$. For masked row-softmax attention on $\widetilde G$, $C_S$ is finite; for example, $C_S\le \sqrt{\Delta_{\widetilde G}}$, where
\begin{equation}
  \Delta_{\widetilde G}:=\max_{j\in V}\bigl|\{i\in V:j\in\mathcal{N}(i)\}\bigr|
\end{equation}
is the maximum column sparsity of the attention mask. When the mask is symmetric, $\Delta_{\widetilde G}$ is the maximum degree of $\widetilde G$, and in all cases $C_S\le\sqrt n$.
The Lipschitz constant of $f$ with respect to $X$, denoted as $\mathcal{L}_{\mathrm{Attn}}$, is bounded by:
\begin{equation}
    \mathcal{L}_{\mathrm{Attn}} \le \|W_U\|_2 \left( C_S + \frac{2 L_\sigma C_X^2}{\sqrt{d_k}} \|W_Q\|_2 \|W_K\|_2 \right).
\end{equation}
\end{lemma}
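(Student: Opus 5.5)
The plan is a product-rule argument. Write $f(X)=S(X)\,XW_U$ with $S(X)=\mathrm{Softmax}(A(X))$ and logits $A(X)=XW_QW_K^\top X^\top/\sqrt{d_k}+B$. The mask $B$ does not depend on $X$, so it cancels in every difference of logits. For $X,Y$ with $\|X\|_2,\|Y\|_2\le C_X$, add and subtract $S(Y)XW_U$:
\begin{equation*}
f(X)-f(Y)=S(X)\,(X-Y)W_U+\bigl(S(X)-S(Y)\bigr)\,YW_U .
\end{equation*}
By submultiplicativity of the spectral norm, the first term is at most $C_S\|W_U\|_2\|X-Y\|_2$. The second term is at most $\|S(X)-S(Y)\|_2\,C_X\|W_U\|_2$. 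This already produces the overall factor $\|W_U\|_2$ and the term $C_S$.

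For the softmax difference, use the definition of $L_\sigma$ to get $\|S(X)-S(Y)\|_2\le L_\sigma\|A(X)-A(Y)\|_2$. Then split the bilinear logit difference the same way:
\begin{equation*}
XMX^\top-YMY^\top=(X-Y)MX^\top+YM(X-Y)^\top, \qquad M=W_QW_K^\top .
\end{equation*}
This gives $\|A(X)-A(Y)\|_2\le 2C_X\|W_Q\|_2\|W_K\|_2\|X-Y\|_2/\sqrt{d_k}$. Multiplying by the factor $C_X$ from the second term yields $2L_\sigma C_X^2\|W_Q\|_2\|W_K\|_2/\sqrt{d_k}$, and summing the two terms gives the claimed bound.

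It remains to justify $C_S\le\sqrt{\Delta_{\widetilde G}}$ and $C_S\le\sqrt n$. Use $\|S\|_2\le\sqrt{\|S\|_1\|S\|_\infty}$.
\begin{itemize}
\item Every row of $S$ is a probability vector on $\mathcal N(i)$, so $\|S\|_\infty=1$.
\item Each column $j$ has at most $\Delta_{\widetilde G}$ nonzero entries, each at most $1$, so $\|S\|_1\le\Delta_{\widetilde G}\le n$.
\end{itemize}
Hence $\|S\|_2\le\sqrt{\Delta_{\widetilde G}}\le\sqrt n$. When the mask is symmetric, the column support of $j$ is exactly $\mathcal N(j)$, so $\Delta_{\widetilde G}$ is the maximum degree.

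The main subtlety is the softmax step: $L_\sigma$ must be a valid Lipschitz constant on the bounded set of masked logits. The $-\infty$ entries are fixed and can be dropped; the softmax then acts row-wise on finite logits over $\mathcal N(i)$. The finite logits are bounded by $C_X^2\|W_Q\|_2\|W_K\|_2/\sqrt{d_k}+\max|b_{ij}|$. The lemma takes $L_\sigma$ as given, so I would only note that such a constant exists: the row-wise softmax Jacobian $\mathrm{diag}(s)-ss^\top$ has operator norm at most $1$ per row. Passing from this row-wise bound to a spectral-norm bound on the stacked matrix may cost a dimension factor, and that factor is absorbed into $L_\sigma$.
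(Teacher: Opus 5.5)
Your proposal is correct and is essentially the paper's product-rule argument, with two differences: you work with finite differences and apply the Lipschitz property of the softmax on the logit set directly, where the paper bounds the Jacobian uniformly on the ball and integrates along the segment; and you actually prove $C_S\le\sqrt{\Delta_{\widetilde G}}$ via $\|S\|_2\le\sqrt{\|S\|_1\|S\|_\infty}$, which the paper only asserts. One cosmetic slip: your displayed identity comes from adding and subtracting $S(X)YW_U$, not $S(Y)XW_U$, though either choice gives the same bound.
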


\begin{proof}[Proof sketch]
Write $A(X)=\frac{1}{\sqrt{d_k}}(XW_Q)(XW_K)^\top+B$ for the pre-softmax logits, $S(X)=\mathrm{Softmax}(A(X))$, and $Z=f(X)=S(X)XW_U$. For a perturbation $X\to X+\Delta$, the product rule gives $\delta Z=(\delta S)XW_U+S(\delta X)W_U$. The second term is bounded by $C_S\|\delta X\|_2\|W_U\|_2$. For the first term, the bias $B$ depends only on edge features, so $\delta B=0$ and $\|\delta A\|_2\le \frac{2\|X\|_2}{\sqrt{d_k}}\|W_Q\|_2\|W_K\|_2\|\delta X\|_2$; the row-wise Softmax Jacobian bound then yields $\|\delta S\|_2\le L_\sigma\|\delta A\|_2$. Combining the two terms bounds the operator norm of the Jacobian uniformly over $\{X:\|X\|_2\le C_X\}$, and integrating this bound along the line segment between any two points of this set gives the stated Lipschitz constant.
\end{proof}

\paragraph{Extension to multi-head attention.}
For the MHA module defined in Eq.~\eqref{eq:mha}, the output is a concatenation projected by $W_O$.
Let $\mathcal{L}^{(h)}$ be the Lipschitz constant of the $h$-th head derived in Lemma~\ref{lem:single-head-lip}.
Using $\|[A_1,\ldots,A_H]\|_2\le \sqrt{\sum_{h=1}^H\|A_h\|_2^2}$ for block concatenation, the total Lipschitz constant $\mathcal{L}_{\mathrm{MHA}}$ is:
\begin{equation}
    \mathcal{L}_{\mathrm{MHA}} \le \|W_O\|_2 \cdot \sqrt{\sum_{h=1}^H \left(\mathcal{L}^{(h)}\right)^2}.
\end{equation}
This derivation highlights that the stability of the attention module depends quadratically on the magnitude of node features $C_X$, multiplicatively on the spectral norms of the weight matrices, and on the sparse attention pattern through $C_S$.

\subsection{Aggressive Sampling via Conditioned Probing}
\label{app:aggressive-sampling}

Aggressive sampling increases the number of conditioned replicas evaluated at test time. Instead of restricting SAG-Sep to the base grid $\{m/\max(M-1,1)\}_{m=0}^{M-1}$, it samples a denser set $\mathcal W=\{w_j\}_{j=1}^{J^\star}\subset[0,1]$ and constructs one conditioned graph for each $w_j$. All replicas are evaluated in a batch, and each probability vector is thresholded to obtain
\begin{equation}
\hat S(w_j)=\{\,i\in V^+: p_i(w_j)\ge \tau\,\}.
\end{equation}
The induced RCCs are then evaluated under the current LP solution, deduplicated, and ranked by violation. This procedure can discover useful cuts missed by the base grid, but it increases both inference workload and the cost of subsequent LP re-optimization. The concrete size-adaptive grid rule used in our ablation is given in Appendix~\ref{sec:hyperparameter_selection}.

Intuitively, the base grid observes only a small number of hard-thresholded subsets from a continuous conditioning space. Between two adjacent conditioning values, many replicas remain identical if all node probabilities stay safely away from the threshold; new subsets appear only when one or more boundary nodes cross it. Aggressive sampling is therefore most useful when these unstable regions occupy a non-negligible fraction of the conditioning interval, and it naturally saturates once the grid is finer than the model's effective margin scale. This perspective also motivates the Onion-Sweep decoder in Section~\ref{sec:aggressive-sampling}, which replaces the cutoff by a violation-guided selection along the score ranking instead of evaluating a much denser conditioning grid.

The benefit of dense conditioning can be understood through local stability. Appendix~\ref{sec:lipschitz-proof} provides a module-level Lipschitz bound for the attention block; here we use the corresponding local stability assumption for the scalar conditioning channel. Let $R$ be the grid resolution, with grid points $w_r=r/R$ and intervals $I_r=[w_r,w_{r+1})$ for $r=0,\ldots,R-1$. Assume the predictor is Lipschitz continuous in the conditioning value: for all nodes $i$ and all $w,w'\in[0,1)$,
\begin{equation}
|p_i(w)-p_i(w')|\le L|w-w'|.
\end{equation}
Define the safety margin at $w_r$ as
\begin{equation}
\gamma(w_r)=\min_{i\in V^+}|p_i(w_r)-\tau|.
\end{equation}
If $\gamma(w_r)>L/R$, the thresholded subset is stable throughout $I_r$; otherwise, the interval may contain a threshold crossing. Let
\begin{equation}
\mathcal U=\bigcup_{r:\gamma(w_r)\le L/R} I_r
\end{equation}
denote the unstable region.

\begin{proposition}[Resolution Error Bound under Marginal Anti-concentration]
\label{prop:resolution-error}
Assume there exists a function $q(\cdot)$ such that, for every node $i\in V^+$, every conditioning value $w\in[0,1)$, and every $\varepsilon>0$,
\begin{equation}
\Pr\!\left(|p_i(w)-\tau|\le \varepsilon\right)\le q(\varepsilon),
\end{equation}
where the probability is over the distribution of LP states/instances and any inference randomness. The expected Lebesgue measure of the unstable region satisfies
\begin{equation}
\mathbb{E}[\mathrm{vol}(\mathcal U)]\le |V^+|\,q(L/R).
\end{equation}
\end{proposition}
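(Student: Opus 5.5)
The plan is to write $\mathrm{vol}(\mathcal U)$ as a finite sum of indicator variables, one per grid interval, and then apply a union bound over nodes together with the anti-concentration hypothesis at the grid points only. The Lipschitz assumption is needed only to justify calling $\mathcal U$ the unstable region; it plays no role in the volume bound itself.

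First, the intervals $I_r=[w_r,w_{r+1})$, $r=0,\ldots,R-1$, are pairwise disjoint, partition $[0,1)$, and each has Lebesgue measure $1/R$. Since $\mathcal U$ is a union of some of these intervals,
\[
\mathrm{vol}(\mathcal U)=\frac{1}{R}\sum_{r=0}^{R-1}\mathbf 1\{\gamma(w_r)\le L/R\}.
\]
Next, by the definition of $\gamma(w_r)$ as a minimum over $i\in V^+$, the event $\{\gamma(w_r)\le L/R\}$ equals $\bigcup_{i\in V^+}\{|p_i(w_r)-\tau|\le L/R\}$. Hence
\[
\mathbf 1\{\gamma(w_r)\le L/R\}\le \sum_{i\in V^+}\mathbf 1\{|p_i(w_r)-\tau|\le L/R\}.
\]

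Then I take expectations over the distribution of LP states, instances, and inference randomness. Linearity applies because the sums are finite. The hypothesis, applied with $w=w_r\in[0,1)$ and $\varepsilon=L/R>0$, bounds each probability by $q(L/R)$. This gives
\[
\mathbb E[\mathrm{vol}(\mathcal U)]\le \frac{1}{R}\sum_{r=0}^{R-1}\sum_{i\in V^+} q(L/R)=|V^+|\,q(L/R),
\]
which is the claim. The grid points $w_r=r/R$ for $r\le R-1$ all lie in $[0,1)$, so the hypothesis applies at every one of them.

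There is no real obstacle here. The only points to check are measurability and the range of the hypothesis. Each $\gamma(w_r)$ is a minimum of finitely many random variables, so $\mathrm{vol}(\mathcal U)$ is a finite combination of indicators of measurable events. The hypothesis also covers exactly the points and the $\varepsilon$ that are used. The step to state carefully is the replacement of the minimum in $\gamma$ by a union over nodes. This union bound is what produces the factor $|V^+|$ and makes the bound possibly loose when several nodes are near the threshold simultaneously.
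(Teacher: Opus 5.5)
Your proposal is correct and matches the paper's proof essentially step for step. The paper also writes $\mathbb{E}[\mathrm{vol}(\mathcal U)]=\frac{1}{R}\sum_{r=0}^{R-1}\Pr(\gamma(w_r)\le L/R)$ using the disjoint intervals of length $1/R$, bounds each probability by a union bound over $i\in V^+$ together with the anti-concentration hypothesis at $w=w_r$ and $\varepsilon=L/R$ (implicitly taking $L>0$, as you do), and sums to obtain $|V^+|\,q(L/R)$.
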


This bound controls the measure of the conditioning interval in which thresholded subsets can change; it does not bound missed cuts, but it explains why dense conditioning saturates rather than improving indefinitely. Increasing the grid resolution reduces the perturbation scale \(L/R\), so once \(R\) exceeds the model's effective margin scale, the only intervals that can still produce new thresholded subsets are those containing nodes extremely close to the decision boundary. When the model has little probability mass near the threshold, dense probing quickly reaches diminishing returns rather than producing an unbounded number of useful new cuts. This is also why Onion-Sweep decodes the violation profile of the score ranking directly instead of relying on ever-denser conditioning grids.

\paragraph{Empirical check of marginal anti-concentration.}
The proposition is informative only if $q(\varepsilon)\to 0$ as $\varepsilon\to 0$. We check this empirically on the trained SAG-Sep predictor, pooling predictions over nodes and conditioning levels (the proposition assumes the bound for each of them). Using $442$ random LP states drawn from $N=300$ root relaxations, we collect roughly $6\times 10^6$ post-sigmoid predictions $p_i(w)$ across nodes, conditioning levels, and instances; Figure~\ref{fig:margin-anticoncentration} reports the resulting marginal distribution and CDF.
\begin{figure*}[htbp]
  \centering
  \includegraphics[width=0.95\linewidth]{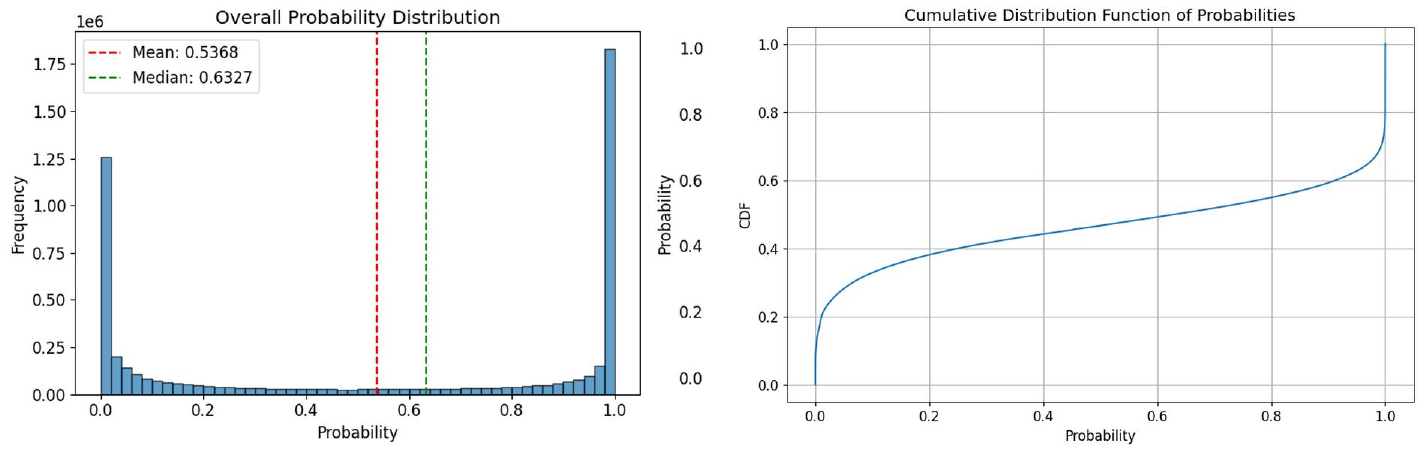}
  \caption{Empirical marginal of $p_i(w)$ on $N=300$ root-relaxation LP states (approximately $6\times 10^6$ post-sigmoid predictions). Left: histogram is sharply bimodal at $0$ and $1$ with low mass near the threshold $\tau=0.5$. Right: the CDF is shallow on the central region, giving $\Pr(|p_i(w)-\tau|\le\varepsilon)\lesssim 0.6\,\varepsilon$ for $\varepsilon\le 0.1$ and supporting $q(\varepsilon)=O(\varepsilon)$.}
  \label{fig:margin-anticoncentration}
\end{figure*}
Two observations follow. First, the predictor concentrates its mass near $0$ and $1$ with sharply lower density in the central region around $\tau$, and this bimodal pattern is consistent across instances rather than driven by a few outliers (Figure~\ref{fig:margin-anticoncentration-pergraph}). Reading the empirical CDF, $\Pr(|p_i(w)-\tau|\le\varepsilon)\lesssim 0.6\,\varepsilon$ for $\varepsilon\le 0.1$, so $q(\varepsilon)$ scales linearly with a small constant in the relevant regime. Second, the same bimodal structure explains why the hard-threshold readout is already a reasonable decoder and why its remaining failures concentrate on the few nodes near $\tau$; Onion-Sweep removes this dependence by selecting prefixes of the score ranking by violation rather than by a cutoff.
\begin{figure*}[htbp]
  \centering
  \includegraphics[width=0.95\linewidth]{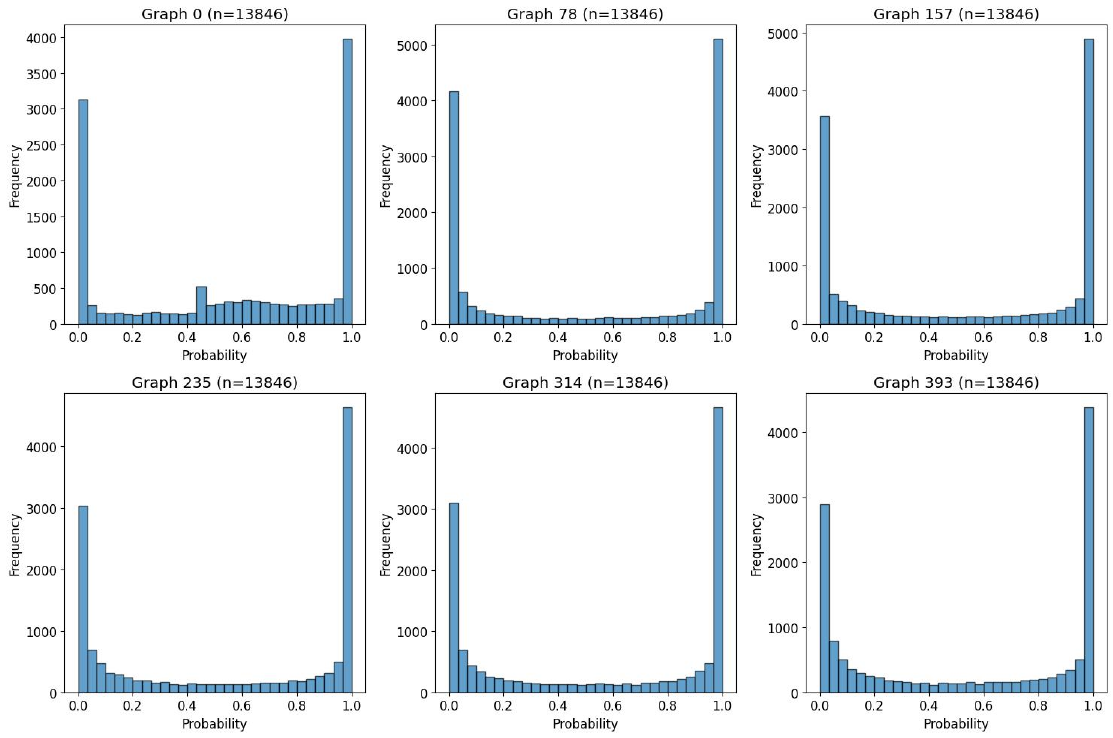}
  \caption{Per-instance marginals of $p_i(w)$ on six randomly sampled $N=300$ root-relaxation LP states. The bimodal pattern, and the resulting low density near $\tau=0.5$, is structural rather than instance-specific; isolated secondary modes (e.g.\ Graph $0$ near $p=0.45$) do not violate the linear anti-concentration rate read from the aggregate CDF.}
  \label{fig:margin-anticoncentration-pergraph}
\end{figure*}

\subsection{Proof of the Resolution Error Bound}
\label{app:proof-resolution}

\begin{proof}[Proof of Proposition~\ref{prop:resolution-error}]
Let $\delta=1/R$ and let $U_r$ be the indicator that interval $I_r$ is unstable:
\begin{equation}
    U_r = \mathbb{I}\left( \gamma(w_r) \le L\delta \right) =
    \begin{cases}
    1 & \text{if } \min_{i \in V^+} |p_i(w_r) - \tau| \le L\delta, \\
    0 & \text{otherwise}.
    \end{cases}
\end{equation}
Since the intervals are disjoint and each has measure $\delta$,
\begin{equation}
    \mathbb{E}[\mathrm{vol}(\mathcal{U})]
    = \delta \sum_{r=0}^{R-1} \Pr(U_r=1).
    \label{eq:sum_prob}
\end{equation}
For each grid point $w_r$, the unstable event is a union over nodes. By the union bound,
\begin{align}
    \Pr(U_r=1)
    &=
    \Pr\left(\bigcup_{i\in V^+}\{|p_i(w_r)-\tau|\le L\delta\}\right) \\
    &\le
    \sum_{i\in V^+}\Pr\left(|p_i(w_r)-\tau|\le L\delta\right)
    \le |V^+|\,q(L/R),
\end{align}
where the last inequality uses the marginal anti-concentration assumption. Substituting into \eqref{eq:sum_prob} gives
\begin{equation}
    \mathbb{E}[\mathrm{vol}(\mathcal{U})]
    \le \delta \sum_{r=0}^{R-1}|V^+|\,q(L/R)
     = |V^+|\,q(L/R).
\end{equation}
\end{proof}

\subsection{Proof of Sparse Augmentation Complexity}
\label{app:proof-er-blowup}

\begin{proof}[Proof of Proposition~\ref{prop:ER_blowup}]
Let $\lambda_N=\rho(N-1)$. For two distinct fixed nodes $u,v\in V$, let
$d_G(u,v)$ denote their shortest-path distance in $G_{\mathrm{LP}}$. By symmetry,
\begin{align}
    \mathbb{E}[|E_{\mathrm{LP}}|] &= \binom{N}{2}\rho, \\
    \mathbb{E}[|\widetilde E|] &= \binom{N}{2}\mathbb{P}(1\le d_G(u,v)\le K).
\end{align}
Therefore
\begin{equation}
    \mathcal{R}_N :=
    \frac{\mathbb{E}[|\widetilde E|]}{\mathbb{E}[|E_{\mathrm{LP}}|]}
    =
    \frac{\mathbb{P}(1\le d_G(u,v)\le K)}{\rho}.
    \label{eq:ratio_simplified}
\end{equation}

Let $X_t$ be the number of simple paths of length $t$ between $u$ and $v$, and
let $X_{\le K}:=\sum_{t=1}^K X_t$. Then
$\{1\le d_G(u,v)\le K\}$ is exactly the event $\{X_{\le K}\ge 1\}$.
For a fixed $t$, a simple length-$t$ path has $t-1$ ordered intermediate nodes, so
\begin{equation}
    M_t=\frac{(N-2)!}{(N-t-1)!}=(N-2)(N-3)\cdots(N-t)
\end{equation}
such paths are possible, with the product interpreted as $1$ when $t=1$. By independence of edge events under
$\mathcal{G}(N,\rho)$, each such path appears with probability $\rho^t$. Hence
\begin{equation}
    \mu_N:=\mathbb{E}[X_{\le K}]
    =\sum_{t=1}^K M_t\rho^t.
\end{equation}

Since $K$ is fixed and $\lambda_N^K=o(N)$, we have $\mu_N=o(1)$. A Bonferroni
bound gives
\begin{equation}
    \mu_N-\sum_{P<Q}\mathbb{P}(P\text{ and }Q\text{ both occur})
    \le
    \mathbb{P}(X_{\le K}\ge 1)
    \le
    \mu_N,
\end{equation}
where $P<Q$ ranges over unordered pairs of simple $u$--$v$ paths of length at most $K$. The pair
intersection term is $o(\mu_N)$ in the same sparse local regime: edge-disjoint
path pairs contribute $O(\mu_N^2)$, while distinct path pairs sharing edges form
a connected subgraph with a cycle. For each fixed overlap type, a union subgraph
with $e$ edges has at most $e-2$ free internal vertices, so its contribution is
$O(\lambda_N^e/N^2)$. Summing over the finitely many overlap types with
$e\le 2K$ gives a total shared-edge contribution of order
$O\!\left(N^{-2}(\sum_{t=1}^K \lambda_N^t)^2\right)=O(\mu_N^2)$.
Since $\mu_N=o(1)$, the full pair
intersection term is $o(\mu_N)$.
Thus
\begin{equation}
    \mathbb{P}(1\le d_G(u,v)\le K)
    =
    \mathbb{P}(X_{\le K}\ge 1)
    =
    (1+o(1))\mu_N.
\end{equation}

Substituting this estimate into \eqref{eq:ratio_simplified} yields
\begin{align}
    \mathcal{R}_N
    &=
    (1+o(1))\sum_{t=1}^K M_t\rho^{t-1} \nonumber \\
    &=
    (1+o(1))\sum_{t=1}^K
    \lambda_N^{t-1}
    \prod_{r=2}^{t}\frac{N-r}{N-1}.
\end{align}
The empty product for $t=1$ equals $1$, and for every fixed $t$ the product
converges to $1$. Therefore
\begin{equation}
    \mathcal{R}_N
    =
    (1+o(1))\sum_{t=1}^{K}\lambda_N^{t-1}
    =
    (1+o(1))\sum_{k=0}^{K-1}\lambda_N^k.
\end{equation}
If $\lambda_N\to\lambda$, this gives
$\lim_{N\to\infty}\mathcal{R}_N=\sum_{k=0}^{K-1}\lambda^k$.
\end{proof}

\section{Exact RCC Separation Study and Onion Analysis}
\label{sec:two_exact_sep}

The \textsc{Exact} baseline in Table~\ref{tab:cvrp_3panel_iters_with_exact}
uses exact separation for rounded capacity constraints (RCCs). This appendix
spells out three closely related exact formulations and explains why the
upper-bounded, demand-banded variant is more useful for generating diverse cuts
and for studying the nested structure that motivates Onion-Sweep.

\paragraph{Notation.}
Let $G=(V,E)$ be the complete graph with depot $0$ and customers
$V^+=V\setminus\{0\}$. Let $\bar{x}\in\mathbb{R}^E_{+}$ (customer edges in $[0,1]$, depot edges in $[0,2]$) denote the current fractional LP
solution, $d_i\in\mathbb{Z}_{>0}$ the demand of customer $i$, and $C$ the
vehicle capacity. For $S\subseteq V^+$, define
$D(S)=\sum_{i\in S}d_i$ and
$\bar{x}(\delta(S))=\sum_{\{i,j\}\in E:\,|\{i,j\}\cap S|=1}\bar{x}_{ij}$.
The RCC associated with $S$ is
\begin{equation}
  \bar{x}(\delta(S))\ge 2\left\lceil\frac{D(S)}{C}\right\rceil,
\end{equation}
with violation
$\nu(S)=2\left\lceil D(S)/C\right\rceil-\bar{x}(\delta(S))$.
All formulations below use binary variables $y_i$ to encode
$S=\{i\in V^+:y_i=1\}$ and nonnegative variables $\mu_{ij}$ to linearize the
cut indicator.

\subsection{Three Exact RCC Separation Formulations}
\label{subsec:exact_formulations}

\paragraph{Fukasawa lower-bound formulation.}
The classical exact separator of \citet{fukasawa2006robust} enumerates
$m\in\{0,\ldots,\lceil D(V^+)/C\rceil-1\}$ and solves

\begin{align}
  \min_{y,\mu}\;\;
    & z_F(m) = \sum_{\{i,j\}\in E} \bar{x}_{ij}\,\mu_{ij}
      \label{eq:rccsep-obj} \\
  \text{s.t.}\;\;
    & \mu_{ij} \ge y_i-y_j, && \forall \{i,j\}\in E, \label{eq:rccsep-cut1}\\
    & \mu_{ij} \ge y_j-y_i, && \forall \{i,j\}\in E, \label{eq:rccsep-cut2}\\
    & mC+1 \le \sum_{i\in V^+}d_i y_i, \label{eq:rccsep-demand-lb}\\
    & y_0=0, \label{eq:rccsep-depot}\\
    & y_i\in\{0,1\}, && \forall i\in V^+, \label{eq:rccsep-binary}\\
    & \mu_{ij}\ge 0, && \forall \{i,j\}\in E. \label{eq:rccsep-nonneg}
\end{align}
The returned subset is accepted as violated whenever $z_F(m)<2(m+1)$. This
formulation is efficient and standard, but the demand constraint only
lower-bounds the required vehicle count.

\paragraph{Direct violation-maximization variant.}
A tempting alternative is to keep the same lower-bound feasible region but
optimize the true RCC violation directly:
\begin{align}
  \max_{y,\mu}\;\;
    & z_V(m) =
      2\left\lceil
        \frac{\sum_{i\in V^+}d_i y_i}{C}
      \right\rceil
      - \sum_{\{i,j\}\in E}\bar{x}_{ij}\,\mu_{ij}
      \label{eq:rccsep-direct-violation}\\
  \text{s.t.}\;\;
    & \mu_{ij} \ge y_i-y_j, && \forall \{i,j\}\in E, \label{eq:rccsep-direct-cut1}\\
    & \mu_{ij} \ge y_j-y_i, && \forall \{i,j\}\in E, \label{eq:rccsep-direct-cut2}\\
    & mC+1 \le \sum_{i\in V^+}d_i y_i, \label{eq:rccsep-direct-demand-lb}\\
    & y_0=0, \label{eq:rccsep-direct-depot}\\
    & y_i\in\{0,1\}, && \forall i\in V^+, \label{eq:rccsep-direct-binary}\\
    & \mu_{ij}\ge 0, && \forall \{i,j\}\in E. \label{eq:rccsep-direct-nonneg}
\end{align}
The ceiling term can be linearized with an auxiliary integer vehicle-count
variable, but the compact form makes the behavior clearer: this model optimizes
the actual RCC violation while retaining the same lower-bound demand constraint.
In practice, this version has a severe diversity problem, because a high-demand
high-violation subset remains feasible for many smaller $m$ values and can
dominate the entire enumeration loop.

\paragraph{Demand-banded formulation.}
Our exact-label generation use a demand-banded variant. For
each $m$, we solve
\begin{align}
  \min_{y,\mu}\;\;
    & z_B(m) = \sum_{\{i,j\}\in E} \bar{x}_{ij}\,\mu_{ij}
      \label{eq:rccsep-banded-obj}\\
  \text{s.t.}\;\;
    & \mu_{ij} \ge y_i-y_j, && \forall \{i,j\}\in E, \label{eq:rccsep-banded-cut1}\\
    & \mu_{ij} \ge y_j-y_i, && \forall \{i,j\}\in E, \label{eq:rccsep-banded-cut2}\\
    & mC+1 \le \sum_{i\in V^+}d_i y_i \le (m+1)C,
      \label{eq:rccsep-demand-band}\\
    & y_0=0, \label{eq:rccsep-banded-depot}\\
    & y_i\in\{0,1\}, && \forall i\in V^+, \label{eq:rccsep-banded-binary}\\
    & \mu_{ij}\ge 0, && \forall \{i,j\}\in E. \label{eq:rccsep-banded-nonneg}
\end{align}
Within this band, minimizing $z_B(m)$ is equivalent to maximizing
$2(m+1)-\bar{x}(\delta(S))$. The only change from
\eqref{eq:rccsep-obj}--\eqref{eq:rccsep-nonneg} is the additional upper bound,
but it changes the structure of the returned cut set substantially.

\subsection{Diversity and Repetition Properties}
\label{subsec:exact_diversity}

\begin{lemma}[Cut-value equivalence]
\label{lem:cut_equivalence}
For any fixed binary vector $y$, an optimal choice of $\mu$ in
\eqref{eq:rccsep-cut1}--\eqref{eq:rccsep-nonneg} satisfies
$\mu_{ij}=|y_i-y_j|$ for all $\{i,j\}\in E$, and thus
$\sum_{\{i,j\}\in E}\bar{x}_{ij}\mu_{ij}=\bar{x}(\delta(S))$ for
$S=\{i\in V^+:y_i=1\}$.
\end{lemma}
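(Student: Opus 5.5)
Fix a binary vector $y$, which fixes $S=\{i\in V^+:y_i=1\}$. The remaining problem in $\mu$ is a linear program that decouples completely across edges. The objective is $\sum_{\{i,j\}\in E}\bar x_{ij}\mu_{ij}$. The only constraints involving $\mu_{ij}$ are \eqref{eq:rccsep-cut1}, \eqref{eq:rccsep-cut2} and \eqref{eq:rccsep-nonneg}, and each involves that single variable. The demand constraint \eqref{eq:rccsep-demand-lb} and the depot constraint \eqref{eq:rccsep-depot} involve only $y$. So, for fixed $y$, the problem is a sum of independent one-dimensional problems:
\[
\min_{\mu_{ij}}\ \bar x_{ij}\mu_{ij}\quad\text{s.t.}\quad \mu_{ij}\ge y_i-y_j,\ \ \mu_{ij}\ge y_j-y_i,\ \ \mu_{ij}\ge 0,
\]
one for each edge $\{i,j\}\in E$. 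I will solve each of these separately and then sum the results.

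For a single edge, the three constraints together say exactly $\mu_{ij}\ge\max\{|y_i-y_j|,0\}=|y_i-y_j|$. Because $y$ is binary, $|y_i-y_j|\in\{0,1\}$, and it equals $1$ exactly when one endpoint lies in $S$ and the other does not. This uses $y_0=0$ for depot edges. Since $\bar x_{ij}\ge 0$, the objective is nondecreasing in $\mu_{ij}$, so $\mu_{ij}=|y_i-y_j|$ is optimal.

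The one real subtlety is edges with $\bar x_{ij}=0$. On those edges any $\mu_{ij}\ge|y_i-y_j|$ is also optimal, so uniqueness fails. I would therefore state the result in one of two equivalent ways: either that $\mu_{ij}=|y_i-y_j|$ is \emph{an} optimal choice, or that every optimal $\mu$ agrees with it wherever $\bar x_{ij}>0$. In both readings the optimal objective value is unchanged.

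Summing the per-edge values gives
\[
\sum_{\{i,j\}\in E}\bar x_{ij}|y_i-y_j| \;=\; \sum_{\{i,j\}:\,|\{i,j\}\cap S|=1}\bar x_{ij} \;=\; \bar x(\delta(S)).
\]
The same argument applies verbatim to the direct formulation \eqref{eq:rccsep-direct-violation}, where the sign of the $\mu$-term makes minimizing it optimal, and to the banded formulation \eqref{eq:rccsep-banded-obj}. In neither case do the added constraints involve $\mu$.
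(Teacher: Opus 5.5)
Your proof is correct and follows the paper's argument. The constraints on each $\mu_{ij}$ reduce to $\mu_{ij}\ge|y_i-y_j|$, the lower bound is optimal because $\bar x\ge 0$ (and forced only where $\bar x_{ij}>0$, which is the same caveat you raise), and summing over edges gives $\bar x(\delta(S))$.
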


\begin{proof}
Given binary $y$, constraints \eqref{eq:rccsep-cut1}--\eqref{eq:rccsep-cut2} and
$\mu_{ij}\ge 0$ imply $\mu_{ij}\ge |y_i-y_j|$. Choosing
$\mu_{ij}=|y_i-y_j|$ is feasible and, since $\bar{x}_{ij}\ge 0$, optimal (and forced on every edge with $\bar{x}_{ij}>0$).
Summing over edges gives $\bar{x}(\delta(S))$.
\end{proof}

\begin{proposition}[Repeated feasible sets under lower-bound formulations]
\label{prop:repeat_across_M}
Fix any subset $S\subseteq V^+$ with demand $D(S)$. Under any formulation that
uses only the lower-bound demand constraint \eqref{eq:rccsep-demand-lb}, the
same $S$ is feasible for every integer $m$ satisfying
\begin{equation}
  m\le \left\lfloor\frac{D(S)-1}{C}\right\rfloor .
\end{equation}
Consequently, both the Fukasawa formulation and the direct violation-maximizing
variant can return duplicated subsets across multiple $m$ values.
\end{proposition}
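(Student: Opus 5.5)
The plan is a direct feasibility check followed by a short argument about how a solver behaves when the same subset is feasible for several values of $m$. Fix $S$ and let $y$ be its indicator vector, with $y_0=0$. By Lemma~\ref{lem:cut_equivalence}, setting $\mu_{ij}=|y_i-y_j|$ gives a feasible pair $(y,\mu)$ for the edge constraints \eqref{eq:rccsep-cut1}--\eqref{eq:rccsep-cut2} and the nonnegativity constraint. This choice does not depend on $m$, so the only constraint involving $m$ is the demand lower bound \eqref{eq:rccsep-demand-lb}, namely $mC+1\le D(S)$. Because $m$, $C$ and $D(S)$ are integers, this is equivalent to $m\le (D(S)-1)/C$, and hence to $m\le\lfloor (D(S)-1)/C\rfloor$. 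That establishes the first claim for every such $m$ in the enumeration range $m\ge 0$, and the range is nonempty whenever $D(S)\ge 1$.

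For the consequence, I would argue that the feasible regions are nested. The feasible set for $m$ contains the feasible set for $m+1$, because raising $m$ only tightens \eqref{eq:rccsep-demand-lb}.

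\textbf{Direct violation-maximization variant.} Here the objective \eqref{eq:rccsep-direct-violation} does not depend on $m$. Suppose $S^\star$ is optimal at some level $m'$. Then for every $m\le m'$, $S^\star$ is still feasible, and its objective value is at least the optimum over the smaller region. So $S^\star$ remains optimal for all $m\le m'$, and a deterministic solver can return it at each of those levels. In particular, it is duplicated whenever $\lfloor (D(S^\star)-1)/C\rfloor\ge 1$, for example when a single high-demand, high-violation subset with $D(S^\star)>C$ exists.

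\textbf{Fukasawa formulation.} Its objective $\bar x(\delta(S))$ is also independent of $m$. The same nesting argument shows that if the minimizer $S(m')$ at level $m'$ happens to satisfy \eqref{eq:rccsep-demand-lb} at a smaller level $m$, and no strictly lighter cut exists in the larger region, then $S(m)=S(m')$. Since the statement only says the formulations \emph{can} return duplicates, it suffices to give a witness. I would take a small LP state in which one subset $S$ with $D(S)>C$ has the minimum crossing value $\bar x(\delta(S))$ among all subsets with $D\ge 1$; then $S$ is returned at both $m=0$ and $m=1$.

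The main obstacle is wording rather than mathematics. The word ``can'' should be read as existence of instances, or of solver tie-breaking, under which duplicates occur, not as a claim that they always occur. I would state this reading explicitly, and note that ties among optimal solutions could be broken adversarially toward $S$. For contrast, I would remark that the banded constraint \eqref{eq:rccsep-demand-band} makes each $S$ feasible for exactly one $m$, namely $m=\lceil D(S)/C\rceil-1$, which rules out such repetition.
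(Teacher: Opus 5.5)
Your feasibility argument is exactly the paper's proof, which consists only of rewriting $mC+1\le D(S)$ as $m\le (D(S)-1)/C$ and taking the floor. The paper treats the ``consequently'' clause as immediate from this repeated feasibility and makes no claim about optimality. Your additional argument for the direct violation-maximization variant, however, contains a step that fails.

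Write $F_m$ for the subsets feasible at level $m$. You correctly note $F_{m+1}\subseteq F_m$. You then conclude that a maximizer $S^\star$ over $F_{m'}$ remains optimal over $F_m$ for every $m\le m'$. But lowering $m$ \emph{enlarges} the region, and optimality over a set does not transfer to a superset. The newly admitted subsets with $mC+1\le D\le m'C$ have a smaller rounded term, but possibly a much smaller crossing value.

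For a concrete failure, let $T$ be an isolated subtour, so $\bar x(\delta(T))=0$ and $D(T)\le C$. Put every other customer $u$ on its own out-and-back route ($\bar x_{0u}=2$), with $D(T)+d_u\le C$. Then $T$ has violation $2$ at level $0$. Any subset with $D\ge C+1$ must contain $k\ge 2$ outside customers, so its rounded term is at most $2k$ while its crossing value is at least $2k$; hence its violation is at most $0$. The level-$1$ optimizer is therefore not optimal at level $0$. In your Fukasawa paragraph you did add the needed hypothesis (``no strictly lighter cut exists in the larger region''); that hypothesis is exactly what is missing here.

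The repair is to propagate optimality upward from the largest region, which is what your Fukasawa witness already does. Take $S^\star$ optimal at $m=0$, i.e.\ over $F_0$. For every $m\le\lfloor (D(S^\star)-1)/C\rfloor$ you have $S^\star\in F_m\subseteq F_0$. Hence $S^\star$ is optimal at each such level and is returned there, with ties broken toward it. This gives your stated duplication condition $\lfloor (D(S^\star)-1)/C\rfloor\ge 1$, now with $S^\star$ correctly defined as the level-$0$ optimizer.

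With this correction your proposal is sound and goes beyond the paper's proof. The paper justifies ``can return duplicated subsets'' only through repeated feasibility, not through any transfer of optimality across levels.
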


\begin{proof}
Feasibility with respect to $m$ is determined by $mC+1\le D(S)$, which is
equivalent to $m\le (D(S)-1)/C$. Taking the floor gives the claim.
\end{proof}

By Proposition~\ref{prop:repeat_across_M}, this repetition affects both
lower-bound formulations. It is mild but still visible in the Fukasawa
formulation: the objective minimizes only the crossing value, so a low-crossing
high-demand set can remain attractive across many lower-bound subproblems. It is
much more severe for the direct violation-maximization variant, whose objective
explicitly rewards large rounded demand; empirically this can collapse the loop
onto the same high-demand subset and makes the formulation nearly unusable as a
diverse cut generator.

\begin{proposition}[One demand band per subset]
\label{prop:unique_M}
Under the band constraint \eqref{eq:rccsep-demand-band}, any subset $S$ with
demand $D(S)$ is feasible for at most one integer $m$, namely
\begin{equation}
  m=\left\lceil\frac{D(S)}{C}\right\rceil-1 .
\end{equation}
Thus a single enumeration over $m$ cannot repeatedly admit the same subset
across multiple subproblems.
\end{proposition}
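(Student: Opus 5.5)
The plan is to argue directly from the band constraint \eqref{eq:rccsep-demand-band}, exactly as in the proof of Proposition~\ref{prop:repeat_across_M}. By Lemma~\ref{lem:cut_equivalence}, feasibility of a subset $S$ for subproblem $m$ depends only on the demand constraint, since $\mu$ can always be set to $|y_i-y_j|$. So $S$ is feasible for $m$ if and only if $mC+1\le D(S)\le (m+1)C$.

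We first rewrite this double inequality in terms of $D(S)/C$. Because $D(S)$, $C$ and $m$ are integers, the condition $mC+1\le D(S)$ is equivalent to $mC<D(S)$, that is, $m<D(S)/C$. The upper bound gives $D(S)/C\le m+1$. Together these say
\[
m < \frac{D(S)}{C} \le m+1,
\]
which is precisely the defining property of the ceiling: $m+1=\lceil D(S)/C\rceil$. Hence the only admissible index is $m=\lceil D(S)/C\rceil-1$, and it is admissible exactly when this value lies in the enumeration range $\{0,\dots,\lceil D(V^+)/C\rceil-1\}$. Since $D(S)\ge 1$ when $S$ is nonempty, and $D(S)\le D(V^+)$, this always holds for nonempty $S$. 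That proves ``at most one'' (indeed exactly one for nonempty $S$). The empty set is infeasible for every $m$, because $mC+1\ge 1>0$.

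The consequence follows immediately. A single enumeration loop over $m$ visits each subproblem once. A subset can be optimal only in a subproblem where it is feasible, and there is only one such subproblem. Therefore the same $S$ cannot be returned by two different subproblems, which contrasts with Proposition~\ref{prop:repeat_across_M}.

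There is no real obstacle in this argument. The one point needing care is using integrality to turn the strict inequality $mC<D(S)$ into $mC+1\le D(S)$, since that step is what makes the ceiling characterization exact.
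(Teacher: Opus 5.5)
Your proof is correct and takes essentially the same route as the paper: use integrality to rewrite the band as $mC<D(S)\le (m+1)C$, divide by $C$, and read off $m=\lceil D(S)/C\rceil-1$ from the ceiling characterization. Your extra remarks are also correct, though the paper does not spell them out: the index always lies in the enumeration range when $S$ is nonempty, and $S=\emptyset$ is never feasible.
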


\begin{proof}
Since demands are integral, \eqref{eq:rccsep-demand-band} is equivalent to
$mC<D(S)\le (m+1)C$. Dividing by $C$ gives
$m<D(S)/C\le m+1$, which is equivalent to
$m=\lceil D(S)/C\rceil-1$.
\end{proof}

By Proposition~\ref{prop:unique_M}, the demand-banded formulation provides
diversity by construction. It does not merely change the acceptance test for
violated RCCs; it changes which subsets are available to each exact subproblem,
yielding one representative cut per demand band and preventing repeated subsets
within a separation round.

\subsection{Empirical Comparison of Exact Formulations}
\label{subsec:exact_empirical_compare}

Table~\ref{tab:exact_sep_compare} compares the classical lower-bound Fukasawa
formulation with the demand-banded variant. We omit the direct
violation-maximization variant from the table because its repeated-subset
behavior makes it ineffective as a multi-cut separator despite optimizing the
true RCC violation objective.

The banded formulation is consistently competitive and becomes stronger on the
larger instances where diversity matters most. At $N=750$ and $N=1000$, it
improves the final average gap from $41.00\%$ to $40.59\%$ and from $50.61\%$
to $50.08\%$, respectively, while also reducing runtime. This supports the
interpretation above: restricting each exact solve to a demand band avoids
wasting the cut budget on duplicated subsets and yields a more useful sequence
of exact cuts.
\begin{table*}[t]
\centering
\caption{Comparison of two usable exact RCC separation formulations: classical lower-bound Fukasawa vs.\ demand-banded diverse separation.}
\label{tab:exact_sep_compare}
\scriptsize
\setlength{\tabcolsep}{4pt}
\renewcommand{\arraystretch}{1.05}
\begin{adjustbox}{max width=\textwidth}
\begin{tabular}{l|r|rrrrrr}
\toprule
Method & Size & Avg.\ Gap ($\downarrow$) & Avg.\ LB ($\uparrow$) & Avg.\ Runtime & Avg.\ Iter. & Avg.\ Cuts & Count \\
\midrule

\multirow{9}{*}{Exact (banded, with UB)}
& 50   & \textbf{1.97\%} & 9363.38  & 54.91   & 59.20 & 227.70 & 10 \\
& 75   & 2.76\% & 13357.85 & 198.81  & 79.70 & 511.10 & 10 \\
& 100  & 4.52\% & 15940.77 & 654.56  & 125.20& 981.00 & 10 \\
& 200  & 6.36\% & 21354.73 & 3575.54 & 169.40& 2231.10& 10 \\
& 300  & \textbf{11.65\%}& 30204.47 & 3654.00 & 49.60 & 1210.40& 10 \\
& 400  & \textbf{18.69\%}& 40430.81 & 3715.66 & 23.20 & 821.80 & 10 \\
& 500  & \textbf{24.41\%}& 44997.84 & 3767.76 & 14.20 & 573.10 & 10 \\
& 750  & \textbf{40.59\%}& 54228.71 & 4059.01 & 4.20  & 278.50 & 10 \\
& 1000 & \textbf{50.08\%}& 46757.73 & 4520.46 & 2.30  & 169.00 & 10 \\
\midrule

\multirow{9}{*}{Exact (classic Fukasawa)}
& 50   & 1.98\% & 9362.89  & 55.45   & 58.00 & 222.10 & 10 \\
& 75   & 2.76\% & 13357.85 & 203.97  & 78.60 & 504.50 & 10 \\
& 100  & 4.52\% & 15940.56 & 671.00  & 122.30& 979.90 & 10 \\
& 200  & \textbf{6.35\%} & 21358.92 & 3593.72 & 169.40& 2229.90& 10 \\
& 300  & 11.98\%& 30047.31 & 3651.60 & 49.10 & 1187.20& 10 \\
& 400  & 18.89\%& 40364.69 & 3698.08 & 24.30 & 852.60 & 10 \\
& 500  & 24.87\%& 44489.87 & 3718.09 & 13.80 & 556.00 & 10 \\
& 750  & 41.00\%& 53653.12 & 4252.43 & 4.30  & 283.50 & 10 \\
& 1000 & 50.61\%& 46304.41 & 4624.86 & 2.30  & 169.00 & 10 \\
\bottomrule
\end{tabular}
\end{adjustbox}
\end{table*}

\subsection{Onion Structure in Exact Labels and Learned Scores}
\label{subsec:onion_structure_analysis}

The demand-banded formulation gives one non-duplicated subset per vehicle-count
band, making it a clean object for studying the structure of exact RCC labels.
For an LP state, let
\begin{equation}
  \mathcal S_{\mathrm{ex}}=\{S_{m_1},\ldots,S_{m_K}\},\qquad
  m_1<\cdots<m_K,
\end{equation}
be the exact subsets returned across demand bands. We report two complementary
nesting measures. \emph{Containment} averages the overlap ratio over all
unordered band pairs,
\begin{equation}
  \mathrm{Containment}(\mathcal S_{\mathrm{ex}})
  =
  \frac{2}{K(K-1)}\sum_{1\le r<s\le K}
  \frac{|S_{m_r}\cap S_{m_s}|}
       {\min\{|S_{m_r}|,|S_{m_s}|\}},
\end{equation}
and captures whether smaller exact subsets are largely contained in larger
ones across the entire band sequence. \emph{M-nesting} is the strict version
of the same quantity, the fraction of band pairs in which the smaller-band
subset is contained in the larger-band one without exception,
\begin{equation}
  \mathrm{M\text{-}nesting}(\mathcal S_{\mathrm{ex}})
  =
  \frac{2}{K(K-1)}\sum_{1\le r<s\le K}
  \mathbb{I}\{S_{m_r}\subseteq S_{m_s}\}.
\end{equation}
A value of one in either metric means perfect nesting. Containment tolerates
a few displaced nodes, whereas M-nesting counts a pair only if it is exactly
nested, so it is the stricter measure; in Table~\ref{tab:onion_exact_learned_alignment} it decreases
as $K$ grows while containment stays high.
Across the exact-label datasets, M-nesting is consistently high: on XML100
exact labels, grouped by route-length bucket, it stays between $0.836$ and
$0.868$; on the random evaluation labels it stays between $0.74$ and $0.85$ from
$N=50$ to $N=1000$ (Table~\ref{tab:onion_exact_learned_alignment}), decreasing
only mildly as the number of bands grows. The all-pair containment is even
higher, $0.88$--$0.95$ across the same sizes. This indicates that the
high-value exact RCCs are not arbitrary unrelated subsets, but ordered
``onion'' layers, and that this structure persists at the headline sizes.

We next ask whether the trained SAG-Sep encoder has learned this priority
structure. For each exact-labeled LP state, define the exact membership
frequency
\begin{equation}
  f_i=\frac{1}{K}\sum_{r=1}^K \mathbb{I}\{i\in S_{m_r}\},\qquad i\in V^+.
\end{equation}
In a perfectly nested family, high-frequency nodes are core onion nodes and
low-frequency nodes enter only in outer layers; for the approximately nested
families observed here, $f$ serves as a priority surrogate. Running the frozen SAG-Sep checkpoint on the same LP
states gives probability vectors $\{p^{(j)}\}_{j=1}^J$ over the vehicle-count
replicas. We compare the exact frequency $f$ with both the cross-replica mean
score $\bar p_i=J^{-1}\sum_j p_i^{(j)}$ and the best-aligned single replica,
using Spearman correlation. The correlations remain strong at every size:
$\rho(\bar p,f)$ is between $0.84$ and $0.89$ through $N=750$ and $0.81$ at
$N=1000$, while the best replica reaches $0.83$--$0.91$.

\begin{table}[t]
\centering
\caption{Exact onion nestedness and alignment between exact membership frequency and frozen SAG-Sep probabilities on random evaluation LP states (each MILP solved to a relative optimality gap of $10^{-4}$, on the first 20 LP states of the exact-separation root loop of each of the ten instances per size).}
\label{tab:onion_exact_learned_alignment}
\scriptsize
\setlength{\tabcolsep}{4pt}
\renewcommand{\arraystretch}{1.05}
\begin{adjustbox}{max width=\linewidth}
\begin{tabular}{l|rrrrrr}
\toprule
Size & LP states & Avg.\ $K$ & Contain. & M-nest & $\rho(\bar p,f)$ & $\rho_{\mathrm{best}}(p^{(j)},f)$ \\
\midrule
$N=50$  & 200 & 5.2  & 0.894 & 0.846 & 0.838 & 0.870 \\
$N=75$  & 200 & 8.4  & 0.881 & 0.811 & 0.864 & 0.883 \\
$N=100$ & 200 & 11.2 & 0.917 & 0.831 & 0.893 & 0.910 \\
$N=200$ & 200 & 15.8 & 0.929 & 0.806 & 0.844 & 0.862 \\
$N=300$ & 200 & 27.3 & 0.931 & 0.790 & 0.847 & 0.860 \\
$N=400$  & 200 & 39.0 & 0.947 & 0.792 & 0.853 & 0.863 \\
$N=500$  & 200 & 48.4 & 0.945 & 0.777 & 0.838 & 0.849 \\
$N=750$  & 200 & 72.7 & 0.951 & 0.765 & 0.837 & 0.850 \\
$N=1000$ & 200 & 79.1 & 0.948 & 0.737 & 0.808 & 0.828 \\
\bottomrule
\end{tabular}
\end{adjustbox}
\end{table}

These results support the design of Onion-Sweep. The exact separator produces
diverse cuts by moving through demand bands, and those cuts are largely nested.
The neural encoder already assigns higher probability to nodes that appear in
more exact onion layers, but hard thresholding each replica independently
discards much of this ranking information. Onion-Sweep recovers it at test
time: within each replica it selects the most violated prefix of the score
ranking, and across replicas it sorts nodes by the soft frequency score and
forms demand-triggered prefixes, thereby converting independently predicted
probabilities into a coordinated nested family of RCC candidates without
additional training.

\section{Implementation and Training Details}
\label{sec:training_detail}

This section documents the training pipeline used in Section~\ref{sec:exp}, including
data generation, feature construction, model configuration, and optimization.

\subsection{Training Procedure}
\label{sec:training_data}

\paragraph{XML datasets.}
The main supervised checkpoint is trained on XML100~\citep{queiroga2022xml100}.
The XML100 repository provides 10{,}000 CVRP instances whose names follow the
format \texttt{XML100\_ABCD\_EF}, where \texttt{ABCD} encodes depot
positioning, customer positioning, demand distribution, and average route
length. For each
instance, we generate exact-label supervision offline at the root relaxation:
solve the LP, apply exact RCC separation across vehicle-count levels, add the
found cuts, and repeat until no further violated RCC is found. Each intermediate
LP state yields supervised examples $(G_{\bar{x}},m,y^\star(m))$, where
$y^\star(m)$ is the exact subset returned for vehicle-count index $m$.

Exact-label generation at size 100 is expensive because the exact separator must
be solved repeatedly until the root cutting-plane loop converges, and difficult
instances can run near the one-hour cap. To study cheaper supervision, we also
generate XML20, XML30, XML40, and XML50 datasets using the same XML generation
taxonomy but smaller customer counts. These smaller instances are much cheaper
to label: XML20 is a few seconds per instance in our labeling runs, XML30 is
on the order of tens of seconds, and XML40/XML50 remain around the minute scale.
They therefore provide a practical bootstrap path when full XML100 exact
labeling is too costly.

\paragraph{Train/test split.}
All XML datasets use the same distribution-stratified split. For each XML
distribution family, the 25 instances with suffixes 1--25 are used for training,
while the remaining instances are reserved for evaluation. For model selection,
we sample a compact held-out set of 2{,}048 graphs from the instances with
suffix \texttt{26}. The split is instance-disjoint and stratified by XML family
(families are shared across splits), and the small validation set keeps
validation inexpensive. Unless otherwise noted, the main SAG-Sep
variants share the same trained checkpoint; the training-size ablation below
trains one checkpoint per XML size.

\paragraph{Features.}
For each customer node $i\in V^+$, the input feature is
$[d_i/C,\;m/\max(M-1,1)]$, where $M=\lceil d(V^+)/C\rceil$ and $m\in\{0,\ldots,M-1\}$ is
the replica index. The depot is masked out in the loss. Each edge in the
augmented LP-support graph carries the two features used in
\Cref{sec:aug-lp-graph}: the LP-flow feature $e^{\mathrm{flow}}_{ij}$ and the
LP-support distance $\mathrm{dist}_{\mathrm{LP}}(i,j)$.

\paragraph{Training settings.}
Training is run on a single NVIDIA B200 GPU paired with an Intel Xeon Platinum
8570 CPU. The XML100 checkpoint takes about one day on one B200. Smaller XML
datasets train substantially faster: XML50 is roughly a half-day run, while for
XML20 the validation loss already shows clear overfitting within about one
hour. The model and optimizer settings used for the main checkpoint are listed
in Tables~\ref{tab:train_model_config} and~\ref{tab:train_optim_config}.

The architecture is described in Appendix~\ref{sec:transformer-layer} and in
Figure~\ref{fig:gt_layer}. We summarize the training-time hyperparameters in
Table~\ref{tab:train_model_config}.

\begin{table}[t]
\centering
\caption{Model configuration for SAG-Sep training.}
\label{tab:train_model_config}
\scriptsize
\setlength{\tabcolsep}{4pt}
\renewcommand{\arraystretch}{1.05}
\begin{tabular}{l|l|l}
\toprule
Block & Item & Value \\
\midrule
\multirow{3}{*}{Backbone}
& type & transformer \\
& \#layers ($L$) & 6 \\
& latent dimension ($d$) & 256 \\
\midrule
\multirow{3}{*}{Transformer}
& \#heads & 8 \\
& dropout & 0.1 \\
& edge features & enabled \\
\midrule
\multirow{3}{*}{Policy head}
& MLP hidden sizes & 128, 64 \\
& output activation & Identity \\
& output & per-node logit \\
\bottomrule
\end{tabular}
\end{table}

We use batch size 16. The optimizer and scheduler are reported in
Table~\ref{tab:train_optim_config}.

\begin{table}[t]
\centering
\caption{Optimizer and scheduler configuration.}
\label{tab:train_optim_config}
\scriptsize
\setlength{\tabcolsep}{4pt}
\renewcommand{\arraystretch}{1.05}
\begin{tabular}{l|l|l}
\toprule
Component & Item & Value \\
\midrule
\multirow{4}{*}{Adam}
& learning rate & $1\times 10^{-3}$ \\
& weight decay & 0.0 \\
& betas & (0.9, 0.999) \\
& eps & $1\times 10^{-8}$ \\
\midrule
\multirow{3}{*}{CosineAnnealingWarmRestarts}
& $T_0$ (optimizer steps) & 200 \\
& $T_{\text{mult}}$ & 1 \\
& $\eta_{\min}$ & $1\times 10^{-5}$ \\
\bottomrule
\end{tabular}
\end{table}

\paragraph{Training data size ablation.}
Table~\ref{tab:training_size_ablation} evaluates how the size of exact-labeled
training instances affects downstream root separation. Smaller exact-labeled
datasets are weaker than XML100, as expected, but the degradation is gradual:
XML40/XML50 already recover much of the XML100 performance while requiring far
cheaper label generation. This supports using small exact-labeled instances as
a practical supervised bootstrap.

\begin{table}[t]
\centering
\caption{Root-gap ablation by exact-labeled training instance size. Values are average root gaps (\%, lower is better) on the random benchmark. Run at an earlier development stage; absolute gaps differ from Table~\ref{tab:cvrp_3panel_iters_with_exact}, and comparisons are within the table.}
\label{tab:training_size_ablation}
\scriptsize
\setlength{\tabcolsep}{4pt}
\renewcommand{\arraystretch}{1.05}
\begin{tabular}{l|ccc}
\toprule
Training data & $N=50$--$100$ & $N=200$--$400$ & $N=500$--$1000$ \\
\midrule
XML20  & 8.79 & 14.03 & 21.91 \\
XML30  & 7.25 & 12.75 & 21.26 \\
XML40  & 6.83 & 10.65 & 18.76 \\
XML50  & 6.05 & 10.06 & 18.10 \\
XML100 & \textbf{5.88} & \textbf{9.43} & \textbf{17.35} \\
\bottomrule
\end{tabular}
\end{table}

\subsection{Loss Functions}
\label{sec:loss_functions}
Let a training sample be indexed by $(g,m)$, where $g$ denotes the LP-support graph
and $m$ is the rounding index. Let $V^+(g)$ be the customer set and let
$y_i\in\{0,1\}$ be the teacher label indicating whether customer $i$ belongs to the
optimal separated subset for this $(g,m)$. The model outputs a logit $s_i\in\mathbb{R}$
per customer; denote $p_i=\sigma(s_i)$.

\paragraph{Standard binary cross-entropy (BCE).}
We use the depot-masked BCE averaged over customers:
\begin{align}
\mathcal{L}_{\mathrm{BCE}}
\;=\;
\frac{1}{|V^+(g)|}\sum_{i\in V^+(g)}
\Big(
- y_i \log p_i \;-\; (1-y_i)\log(1-p_i)
\Big).
\label{eq:bce}
\end{align}
In our final configuration, we do \emph{not} apply additional positive-class weighting;
a sweep over weighting strategies did not improve downstream separation quality
(Table~\ref{tab:posweight_sweep}).

\paragraph{Optional positive-weighted BCE (ablation).}
For completeness, we implemented a per-$m$ bucket weighting scheme motivated by
class imbalance. For a mini-batch, define the number of positive and negative
labels (with depot masked) as $n_+$ and $n_-$. The positive-weighted loss is
\begin{align}
\mathcal{L}_{\mathrm{PW}}
\;=\;
\frac{1}{|V^+(g)|}\sum_{i\in V^+(g)}
\Big(
- \alpha\, y_i \log p_i \;-\; (1-y_i)\log(1-p_i)
\Big),
\qquad
\alpha \;=\; \frac{n_-}{\max\{n_+,1\}}.
\label{eq:posweight_bce}
\end{align}
When multiple $m$ values appear in a batch, we apply \eqref{eq:posweight_bce} within
each $m$ bucket and average across buckets. We also support a mixture between
instance-equal averaging and sample-proportional averaging; empirically, these
variants did not outperform \eqref{eq:bce} in our setting.

\begin{table}[t]
\centering
\caption{Ablation on positive-weight loss: mean optimality gap (\%) by instance size (lower is better). Run at an earlier development stage; absolute gaps differ from Table~\ref{tab:cvrp_3panel_iters_with_exact}, and comparisons are within the table.}
\label{tab:posweight_sweep}
\scriptsize
\setlength{\tabcolsep}{4pt}
\renewcommand{\arraystretch}{1.05}
\begin{tabular}{lccccc}
\toprule
Setting & $N=50$ & $N=100$ & $N=200$ & $N=300$ & $N=400$ \\
\midrule
Sample loss        & 9.47 & 10.86 & 13.22 & 13.39 & 14.85 \\
Mix                & 8.55 & 10.74 & 12.38 & 12.72 & 12.98 \\
Instance loss      & 9.88 & 11.39 & 13.74 & 14.22 & 14.57 \\
No positive weight & \textbf{7.98} & \textbf{9.78} & \textbf{11.91} & \textbf{12.33} & \textbf{12.79} \\
\bottomrule
\end{tabular}
\end{table}

\subsection{Shared Edge Encoding and Inference Memory}
\label{sec:shared_edge_encoding}

At inference time, SAG-Sep evaluates $M$ vehicle-count replicas for the same LP
solution. These replicas differ only in their node-level conditioning value
$m/\max(M-1,1)$; the augmented edge set $\widetilde E$ and all edge input features are
identical. A standard batched graph implementation materializes $M$ disjoint
copies of the encoded edge tensor and the per-edge attention tensors, requiring
$O(M|\widetilde E|D)$ edge-side activation memory for hidden dimension $D$.

Our shared-edge implementation stores one edge index tensor, one encoded edge
tensor in $\mathbb{R}^{|\widetilde E|\times D}$, and a per-layer edge-bias
tensor in $\mathbb{R}^{|\widetilde E|\times H}$, where $H$ is the number of
attention heads. These edge quantities are reused across the replica dimension,
while only the node tensor is stacked over the $M$ conditioning values. This
reduces the edge-side memory to $O(|\widetilde E|D)$, up to the smaller
$O(|\widetilde E|H)$ bias term, and avoids recomputing edge encodings across
replicas.

The benefit is most visible on large instances. On an L4 GPU, the standard
full-batch implementation without shared edges runs out of memory on many
large-vehicle instances, while shared-edge inference completes all cases in the
same evaluation set (Table~\ref{tab:shared_edge_alive}). On medium instances
where both paths run, shared-edge inference is essentially cost-neutral and is
slightly faster at $N=200$ and $N=300$, while $N=400$ is dominated by
time-limit trajectory noise. On the smallest instances ($N\le 75$) the plain
batched forward pass is faster, and the SAG-Sep and SAG-Sep+OS root times at
these sizes in Tables~\ref{tab:cvrp_3panel_iters_with_exact}
and~\ref{tab:test_time_refinement} use it; the bounds are identical either way.

\begin{table}[t]
\centering
\caption{Inference feasibility with and without shared-edge encoding on an L4 GPU. Entries report non-OOM instances out of 10 random benchmark instances.}
\label{tab:shared_edge_alive}
\scriptsize
\setlength{\tabcolsep}{4pt}
\renewcommand{\arraystretch}{1.05}
\begin{tabular}{l|ccc}
\toprule
Size & Full batch, no shared edge & Microbatch 48 & Shared edge \\
\midrule
$N=500$  & 8/10 & 9/10 & \textbf{10/10} \\
$N=750$  & 4/10 & 5/10 & \textbf{10/10} \\
$N=1000$ & 3/10 & 6/10 & \textbf{10/10} \\
\bottomrule
\end{tabular}
\end{table}

Direct memory profiling confirms the source of the improvement. Even on the
minimum-vehicle instance at each size, shared-edge inference reduces peak
allocated memory by $7.4\times$--$13.1\times$ relative to the standard
full-batch path (Table~\ref{tab:shared_edge_memory}). The reduction grows with
instance size because $|\widetilde E|$ grows faster than $|V|$ under graph
augmentation.

\begin{table}[t]
\centering
\caption{Peak allocated GPU memory for full-batch inference without shared edges versus shared-edge inference.}
\label{tab:shared_edge_memory}
\scriptsize
\setlength{\tabcolsep}{4pt}
\renewcommand{\arraystretch}{1.05}
\begin{tabular}{l|l|ccc}
\toprule
Size & Profiled instance & No shared edge & Shared edge & Reduction \\
\midrule
$N=500$  & random-19-X-n500   & 2.50 GiB  & 0.34 GiB & 7.4$\times$ \\
$N=750$  & random-8-X-n750    & 6.46 GiB  & 0.66 GiB & 9.8$\times$ \\
$N=1000$ & random-19-X-n1000  & 15.68 GiB & 1.20 GiB & 13.1$\times$ \\
\bottomrule
\end{tabular}
\end{table}

\subsection{Hyperparameter Selection}
\label{sec:hyperparameter_selection}

\paragraph{Graph and subset thresholds.}
SAG-Sep uses two simple thresholds at inference time. The LP-support threshold
$\tau_1$ filters edges with LP value below $\tau_1$ before constructing the
support graph, while the policy threshold $\tau_2$ converts node probabilities
into a subset, $\hat S=\{i\in V^+:p_i\ge \tau_2\}$; this $\tau_2$ corresponds
to the threshold $\tau$ used in Section~\ref{sec:aggressive-sampling} and
Appendix~\ref{app:aggressive-sampling}. Unless stated otherwise, all
experiments use $\tau_1=10^{-6}$ and $\tau_2=0.50$.
Table~\ref{tab:threshold_sensitivity} reports a one-at-a-time sensitivity sweep.

\begin{table}[t]
\centering
\caption{Sensitivity to graph and subset thresholds. Each cell reports average root gap (\%, lower is better). Run at an earlier development stage; absolute gaps differ from Table~\ref{tab:cvrp_3panel_iters_with_exact}, and comparisons are within the table.}
\label{tab:threshold_sensitivity}
\scriptsize
\setlength{\tabcolsep}{4pt}
\renewcommand{\arraystretch}{1.05}
\begin{tabular}{l|cc|ccc}
\toprule
Setting & $\tau_1$ & $\tau_2$ & $N=50$--$100$ & $N=200$--$400$ & $N=500$--$1000$ \\
\midrule
default & $10^{-6}$ & 0.50 & \textbf{5.88} & 9.43 & 17.35 \\
$\tau_1=0.05$ & 0.05 & 0.50 & 5.91 & \textbf{9.42} & 17.58 \\
$\tau_1=0.10$ & 0.10 & 0.50 & 6.01 & \textbf{9.42} & 17.84 \\
$\tau_2=0.45$ & $10^{-6}$ & 0.45 & 6.31 & 9.45 & \textbf{17.28} \\
$\tau_2=0.55$ & $10^{-6}$ & 0.55 & 6.23 & 9.48 & 17.93 \\
\bottomrule
\end{tabular}
\end{table}

The sweep shows that threshold changes can move the final bound, but no
alternative is uniformly better. The subset threshold is especially tied to
near-boundary probabilities: shifting $\tau_2$ changes which marginal nodes are
included, and the violation-guided sweep of Onion-Sweep removes this dependence
by selecting prefixes of the score ranking by their violation
(Section~\ref{sec:aggressive-sampling}). Because these thresholds are not very interpretable and can be
over-tuned to a benchmark, we keep the fixed defaults $\tau_1=10^{-6}$ and
$\tau_2=0.50$ in all non-ablation experiments.

\paragraph{Aggressive-sampling grid.}
For aggressive sampling, we densify the conditioning grid without retraining the
model. Let $v=\lceil d(V^+)/C\rceil$ denote the vehicle lower bound used by the
current LP state. We use
\begin{equation}
J_{\mathrm{aggr}}
=
\left\lceil
\max\{10\sqrt{v},\,1.25v\}
\right\rceil
\label{eq:aggressive_grid_size}
\end{equation}
conditioning values, evenly spaced over the normalized interval. The
max rule creates a natural crossover at $v=(10/1.25)^2$: below this point, the
$\sqrt v$ term acts as a heuristic boundary-scale proxy. In geometric CVRP
instances, a compact customer region with capacity volume $v$ has perimeter on
the order of $\sqrt v$, and RCC usefulness is governed by the boundary
$\delta(S)$ rather than by volume alone. Above the crossover, the $1.25v$ term
keeps the grid tied to the natural vehicle-count scale. This replaces a fixed
large grid with a size-adaptive one: it avoids injecting too many noisy marginal
subsets on small instances, while leaving enough capacity for larger states where
additional valid cuts can be absorbed by the relaxation. Other grids are
possible, but Onion-Sweep matches or improves on aggressive sampling at every
size except $N=500$ and $N=750$ (where it is within $0.03$ pp) without any
extra forward pass; for this reason, a broader grid ablation is less
informative than the structured decoding studied in
Section~\ref{sec:aggressive-sampling}.

\paragraph{Onion-MBP procedure.}
Our earlier decoder, Onion-MBP (margin-band probing), kept the hard threshold
$\tau$ and probed a band of width $\Delta$ around it in every replica, combined
with the same cross-replica onion step as Onion-Sweep
(Section~\ref{sec:aggressive-sampling}). Its candidate generation is as follows.

\begin{center}
\fbox{\begin{minipage}{0.96\linewidth}
\small
\textbf{Onion-MBP candidate generation.}
Input: probabilities $\{p^{(j)}\}_{j=1}^J$, threshold $\tau$, band width $\Delta$, demands $d$, capacity $C$, and current LP solution $\bar x$.
\begin{enumerate}[leftmargin=1.5em,itemsep=1pt,topsep=2pt]
\item Initialize candidate pool $\mathcal C\leftarrow\emptyset$.
\item \textbf{Per-replica MBP.} For each replica $j$, define the core and margin band
\[
  C_j=\{i\in V^+:p_i^{(j)}>\tau+\Delta\},\qquad
  B_j=\{i\in V^+:|p_i^{(j)}-\tau|\le\Delta\}.
\]
Add to $\mathcal C$ the hard-threshold subset $H_j=C_j\cup\{i\in B_j:p_i^{(j)}\ge\tau\}$ and four boundary probes: $C_j\cup B_j$, $C_j$ plus a random half of $B_j$, $C_j$ plus the high-probability half of $B_j$, and $C_j$ plus the low-probability half of $B_j$.
\item \textbf{Cross-replica onion.} Compute a soft frequency score $a_i=\frac{1}{J}\sum_{j=1}^J p_i^{(j)}$, sort customers as $v_1,\ldots,v_{|V^+|}$ by decreasing $a_i$, form nested prefixes $P_k=\{v_1,\ldots,v_k\}$, and add $P_k$ to $\mathcal C$ whenever $r_k=\lceil d(P_k)/C\rceil$ reaches a new vehicle-count level.
\item \textbf{Finalize subsets.} Filter $\mathcal C$ and keep a proposed subset $S$ only if $2\lceil d(S)/C\rceil-\bar x(\delta(S))>0$.
\end{enumerate}
Output: violated RCC-defining subsets in $\mathcal C$.
\end{minipage}}
\end{center}

MBP treats the nodes far above the threshold as a stable core and explores only
the uncertain band around $\tau$, where small probability shifts can change the
induced RCC. It targets the same source of missed cuts as the aggressive
conditioned probing of Appendix~\ref{app:aggressive-sampling}, namely subsets
that appear between the base conditioning values, without evaluating a denser
conditioning grid.

\paragraph{Onion-MBP ablation.}
The ablation row in Table~\ref{tab:test_time_refinement} uses
$\Delta=0.15$ with cut management (at most 50 cuts added per round, an
active-cut limit of 1000, and removal after 15 inactive rounds) for
$N\le 500$, and the threshold readout plus the onion step, without band probes,
for $N\ge 750$, where the active-cut limit hurt the bound. In preliminary runs the band width was not a sensitive parameter
($\Delta=0.10$ and $\Delta=0.15$ were within about $0.1$ pp of each other at all
sizes up to $N=400$), and beyond $N=75$ the band probes added little over the
onion step alone, which motivated replacing them by the violation-guided sweep.
\section{Additional Discussions and Results}
\label{sec:moredisc}

This appendix collects supplementary experiments that contextualize the main
results: a NeuralSEP$_1$ reproduction and a one-shot baseline (\S\ref{sec:neuralsep1_repro}),
fixed-round comparisons at $N=1000$
(\S\ref{sec:n1000_exact_compare}), an fp32 vs.\ TF32 robustness check
(\S\ref{sec:precision_sensitivity}), a serial SAG-Sep + CVRPSEP hybrid
(\S\ref{sec:cvrpsep_hybrid_appendix}), a $K$-hop ablation
(\S\ref{sec:hop_ablation}), paired per-instance statistics
(\S\ref{sec:paired_appendix}), and transfer beyond the benchmark
(\S\ref{sec:generalization_appendix}).

\subsection{NeuralSEP$_1$ Reproduction and One-Shot Base Model}
\label{sec:neuralsep1_repro}

This subsection contextualizes SAG-Sep against the original NeuralSEP$_1$
pipeline by reproducing both its one-shot base model and its recursive
coarsening variant. The reproduction runs on a legacy machine that differs from
our main-benchmark hardware, but all rows in
Table~\ref{tab:neuralsep1_oneshot_table} use the same machine, software stack,
and solver configuration, so within-table comparisons are controlled.

\paragraph{Reproduction setup.}
We reproduce the NeuralSEP$_1$ one-shot baseline using the authors'
open-sourced code and best released one-shot checkpoint. The reproduction
environment uses the original (older) software stack (DGL with CUDA~11.3) and
runs on a legacy machine with an Intel Core i9-10980XE CPU at 3.00\,GHz and
two NVIDIA A5000 GPUs. We use a simple 2-GPU inference split and keep the
remaining code and solver settings unchanged.

\paragraph{One-shot base model.}
The one-shot row of Table~\ref{tab:neuralsep1_oneshot_table} confirms that
NeuralSEP$_1$'s base model is reasonably competitive on its training-size
regime ($N=50$--$100$) but degrades sharply with instance size: at $N=750$ and
$N=1000$ it produces essentially zero violated cuts, terminates the cutting
loop early, and leaves large optimality gaps. The robustness of the full
NeuralSEP$_1$ pipeline on large instances therefore comes from recursive
coarsening rather than from a strong one-shot predictor.

\paragraph{Coarsening reference.}
The lower row of Table~\ref{tab:neuralsep1_oneshot_table} reports the
best-performing recursive coarsening variant on the same machine. Coarsening
substantially improves robustness on large instances but incurs additional
overhead on small ones. Crucially, even the coarsening variant remains far
behind SAG-Sep at $N\ge 500$ in the main results, indicating that SAG-Sep's
advantage on large instances is not only a runtime artifact.

\begin{table*}[t]
\centering
\caption{NeuralSEP$_1$ reproduction on a legacy machine (Intel i9-10980XE + 2$\times$A5000) using the released code and checkpoints. Both rows use the same hardware and solver setting; reported metrics are averages over instances.}
\label{tab:neuralsep1_oneshot_table}
\scriptsize
\setlength{\tabcolsep}{4pt}
\renewcommand{\arraystretch}{1.05}
\begin{adjustbox}{max width=\textwidth}
\begin{tabular}{l|r|rrrrrr}
\toprule
Method & Size & Avg.\ Gap ($\downarrow$) & Avg.\ LB ($\uparrow$) & Avg.\ Runtime & Avg.\ Iter. & Avg.\ Cuts & Count \\
\midrule

\multirow{9}{*}{NeuralSEP$_1$ one-shot}
& 50   & 8.76\%  & 8719.81  & 6.65   & 23.40  & 69.10   & 10 \\
& 75   & 9.17\%  & 12612.16 & 27.80  & 48.20  & 244.50  & 10 \\
& 100  & 10.13\% & 15067.78 & 66.89  & 68.70  & 407.80  & 10 \\
& 200  & 13.39\% & 19780.76 & 433.91 & 103.90 & 748.00  & 10 \\
& 300  & 17.18\% & 28617.47 & 1928.83& 91.20  & 1534.60 & 10 \\
& 400  & 24.36\% & 37921.09 & 2778.64& 85.60  & 624.40  & 10 \\
& 500  & 40.87\% & 36222.52 & 748.99 & 21.80  & 69.10   & 10 \\
& 750  & 58.42\% & 37319.85 & 91.98  & 1.00   & 0.00    & 10 \\
& 1000 & 59.73\% & 37986.40 & 149.50 & 1.00   & 0.00    & 10 \\
\midrule

\multirow{9}{*}{NeuralSEP$_1$ coarsen (best)}
& 50   & 4.10\%  & 9150.81  & 79.57  & 39.60  & 154.60  & 10 \\
& 75   & 4.95\%  & 13087.16 & 175.38 & 60.50  & 369.80  & 10 \\
& 100  & 6.81\%  & 15578.66 & 346.04 & 88.50  & 609.90  & 10 \\
& 200  & 8.94\%  & 20780.08 & 1125.99& 134.60 & 1230.90 & 10 \\
& 300  & 10.65\% & 30957.90 & 2627.44& 122.10 & 2149.20 & 10 \\
& 400  & 13.78\% & 43304.17 & 3539.18& 81.20  & 2248.10 & 10 \\
& 500  & 16.27\% & 50434.53 & 3657.06& 53.50  & 2024.80 & 10 \\
& 750  & 23.02\% & 70728.66 & 3784.96& 26.60  & 1785.20 & 10 \\
& 1000 & 28.33\% & 67667.39 & 3856.85& 18.90  & 1431.00 & 10 \\
\bottomrule
\end{tabular}
\end{adjustbox}
\end{table*}

\subsection{Fixed-Round Comparisons at $N{=}1000$}
\label{sec:n1000_exact_compare}

To isolate separation quality from the time-limit cutoff that dominates large-$N$
runs, we compare exact separation and SAG-Sep on the same ten $N=1000$
instances under a fixed budget of ten iterations of the cutting-plane loop. Table~\ref{tab:n1000_exact_sag_ten_round}
reports the final mean gap and end-to-end runtime. SAG-Sep essentially matches
the exact separator at this resolution---only a $0.003$ percentage-point gap
difference---while reducing the mean end-to-end runtime by about $870\times$.

\begin{table}[t]
\centering
\caption{Ten-iteration comparison between exact separation and SAG-Sep on ten $N=1000$ instances (the final bound is the LP after nine rounds of cuts).}
\label{tab:n1000_exact_sag_ten_round}
\scriptsize
\setlength{\tabcolsep}{4pt}
\renewcommand{\arraystretch}{1.05}
\begin{tabular}{lccc}
\toprule
Method & Final gap (\%) & Runtime (s) & Instance wins \\
\midrule
Exact separation & 35.411 & 23051.54 & 5 / 10 \\
SAG-Sep & 35.414 & 26.48 & 5 / 10 \\
\bottomrule
\end{tabular}
\end{table}

The speed difference comes primarily from cut generation. Across the same
ten-iteration runs, SAG-Sep spends $0.65$s per separation call on average,
including $0.53$s for the encoder forward pass, whereas exact separation
spends most of its runtime inside the cut-generation routine. Over their first
ten rounds on the same instances in the runs of Table~\ref{tab:cvrp_3panel_iters_with_exact},
the NeuralSEP pipelines spend $15.98$s (NeuralSEP$_2$), $18.95$s
($\pi$-NeuralSEP$_2$) and $19.31$s ($\pi$-NeuralSEP$_2$+GC) per separation
round, $25$--$30$ times more than SAG-Sep, and CVRPSEP spends $4.49$s.

\paragraph{Equal-round comparison of decoders.}
To separate cut quality from inference speed and hardware, Table~\ref{tab:n1000_equal_round} reports the mean gap after each separation round on the same ten $N=1000$ instances, starting from the same raw LP ($59.73\%$ at round~0) and ignoring time. Measured per round, the threshold readout stays within $1$ pp of the exact per-band separator, OS overtakes it from round~5 on, and $\pi$-NeuralSEP$_2$+GC is the weakest at every round. OS adds many more cuts per round (about $150$ at this size, versus one per demand band for the exact separator), which suggests that its per-round advantage comes mainly from larger batches; we do not measure per-cut quality separately.

\begin{table}[t]
\centering
\caption{Mean root gap (\%) after each separation round on the ten $N=1000$ benchmark instances, from the same raw LP and independent of time. The exact column is the exact separator of Table~\ref{tab:n1000_exact_sag_ten_round}; the threshold column is the SAG-Sep run of that table, whose final bound is round~9; $\pi$-NeuralSEP$_2$+GC uses the re-run batch of Appendix~\ref{sec:paired_appendix}.}
\label{tab:n1000_equal_round}
\scriptsize
\setlength{\tabcolsep}{5pt}
\renewcommand{\arraystretch}{1.05}
\begin{tabular}{r|rrrr}
\toprule
Round & Threshold & OS & $\pi$-NeuralSEP$_2$+GC & Exact \\
\midrule
1 & 50.34 & \textbf{49.99} & 51.05 & 50.19 \\
2 & 47.72 & \textbf{47.58} & 48.45 & 47.72 \\
3 & 45.60 & 45.55 & 46.01 & \textbf{45.33} \\
4 & 43.49 & 42.93 & 43.81 & \textbf{42.78} \\
5 & 41.66 & \textbf{40.56} & 42.03 & 40.87 \\
6 & 39.93 & \textbf{38.57} & 40.27 & 39.23 \\
7 & 38.18 & \textbf{36.96} & 38.89 & 37.85 \\
8 & 36.77 & \textbf{35.56} & 37.35 & 36.52 \\
9 & 35.41 & \textbf{34.33} & 35.81 & 35.41 \\
\bottomrule
\end{tabular}
\end{table}

\subsection{fp32 vs.\ TF32 Precision Sensitivity}
\label{sec:precision_sensitivity}

We compare full fp32 inference against TF32-enabled inference on the same
benchmark, checkpoint, and LP setting. Table~\ref{tab:tf32_sensitivity}
summarizes four matched configurations: single-B200 fp32/TF32 and
single-L4 fp32/TF32. TF32 does not yield a uniform bound--time improvement.
On B200 it is sometimes slightly faster on small instances, but the final gap
can also worsen; on L4 it worsens the gap at $N=50$ and $N=75$ and is within
$0.35$ pp elsewhere, slightly better at $N=200$ and $N\ge 500$. The effect
is not a pure throughput issue: thresholded subset prediction changes when a
few node scores move across the $\tau=0.5$ cutoff---the same boundary
instability that Onion-Sweep removes by decoding prefixes of the score ranking
by violation (Section~\ref{sec:aggressive-sampling}). For this reason, all main results use
full fp32 inference.

\begin{table*}[t]
\centering
\caption{Full fp32 versus TF32 on the same root benchmark. Each cell reports mean root gap (\%) with mean runtime in seconds in parentheses. Shared-edge encoding is on at all sizes, so the L4 fp32 times at $N\le 75$ differ from Table~\ref{tab:cvrp_3panel_iters_with_exact} (Appendix~\ref{sec:shared_edge_encoding}).}
\label{tab:tf32_sensitivity}
\scriptsize
\setlength{\tabcolsep}{4pt}
\renewcommand{\arraystretch}{1.05}
\begin{adjustbox}{max width=\textwidth}
\begin{tabular}{lccccccccc}
\toprule
Setting & $N=50$ & $N=75$ & $N=100$ & $N=200$ & $N=300$ & $N=400$ & $N=500$ & $N=750$ & $N=1000$ \\
\midrule
B200, fp32 & 4.37(0.60) & 5.91(1.80) & 7.11(3.50) & 9.10(29.50) & 9.42(411.30) & 9.98(2066.90) & 11.20(3046.10) & 17.37(3627.00) & 22.19(3680.30) \\
B200, TF32 & 5.35(0.50) & 6.00(1.60) & 7.16(3.10) & 8.94(29.90) & 9.46(411.90) & 9.95(2138.90) & 11.62(3320.00) & 17.36(3636.50) & 22.18(3659.80) \\
L4, fp32 & 4.37(0.50) & 5.91(1.40) & 7.11(2.80) & 9.10(23.00) & 9.42(343.30) & 9.98(1897.30) & 11.35(3045.50) & 17.41(3651.00) & 22.03(3655.60) \\
L4, TF32 & 5.29(0.40) & 6.13(1.40) & 7.10(2.80) & 8.96(25.30) & 9.43(449.70) & 9.99(2108.00) & 11.17(2934.00) & 17.25(3637.40) & 21.68(3635.20) \\
\bottomrule
\end{tabular}
\end{adjustbox}
\end{table*}

\subsection{Serial SAG-Sep + CVRPSEP Hybrid}
\label{sec:cvrpsep_hybrid_appendix}

We evaluate a simple serial hybrid in which SAG-Sep runs first and CVRPSEP is
activated only after the learned separator stalls. This is a supplementary
analysis rather than a main comparison, because it augments the learned
separator with a mature classical routine instead of testing SAG-Sep on its
own. Table~\ref{tab:hybrid_cvrpsep_appendix} compares this hybrid against the
baseline SAG-Sep root loop on $N=50$--$400$. The hybrid consistently tightens
the root bound, especially on small and medium instances, but the extra LP and
heuristic work becomes expensive at larger sizes. Against the pure CVRPSEP
results in Table~\ref{tab:cvrp_3panel_iters_with_exact}, the hybrid is
especially attractive from $N=200$ onward: at $N=200$ it is within $0.01$ pp
of CVRPSEP's gap in about a third of the runtime, and at $N=300$ and $N=400$ it
reaches lower gaps in less time. The tradeoff is therefore favorable when
near-exact root bounds matter. The same setup also suggests a unified
bound--time evaluation protocol---run each separator until a fixed root-bound
target and compare runtimes---which we leave to future work.

\begin{table}[t]
\centering
\caption{Serial SAG-Sep + CVRPSEP hybrid versus the SAG-Sep baseline on $N=50$--$400$. Each cell reports mean root gap (\%) with mean runtime in seconds in parentheses.}
\label{tab:hybrid_cvrpsep_appendix}
\scriptsize
\setlength{\tabcolsep}{4pt}
\renewcommand{\arraystretch}{1.05}
\begin{adjustbox}{max width=\linewidth}
\begin{tabular}{lcccccc}
\toprule
Method & $N=50$ & $N=75$ & $N=100$ & $N=200$ & $N=300$ & $N=400$ \\
\midrule
SAG-Sep baseline & 4.37(0.30) & 5.91(1.20) & 7.11(2.80) & 9.10(23.00) & 9.42(343.30) & 9.98(1897.30) \\
SAG-Sep + CVRPSEP & 1.97(0.40) & 2.78(1.70) & 4.55(4.50) & 6.29(62.30) & 7.24(781.50) & 8.69(2719.10) \\
\bottomrule
\end{tabular}
\end{adjustbox}
\end{table}

\subsection{Detailed $K$-hop Ablation}
\label{sec:hop_ablation}

Table~\ref{tab:hop-ablation-compact} reports the ablation over the $K$-hop
closure radius. Larger hop radii occasionally help on the smallest instances
but scale poorly: from $N=100$ onward, $K=2$ gives the best gap at every size,
and the wider radii are consistently weaker. We therefore use $K=2$ as the
default for all models.

\begin{table*}[t]
\centering
\caption{Ablation on the $K$-hop closure parameter ($N\le 500$). We report mean lower bound (LB), mean runtime (Time, seconds), and mean optimality gap (Gap, \%). Best Gap per $N$ is in bold. Separately trained models from an earlier development stage; absolute gaps differ from Table~\ref{tab:cvrp_3panel_iters_with_exact}, and comparisons are within the table.}
\label{tab:hop-ablation-compact}
\scriptsize
\setlength{\tabcolsep}{4pt}
\renewcommand{\arraystretch}{1.05}
\begin{adjustbox}{max width=\textwidth}
\begin{tabular}{r rrr rrr rrr rrr}
\toprule
\multirow{2}{*}{N}
& \multicolumn{3}{c}{2-hop}
& \multicolumn{3}{c}{3-hop}
& \multicolumn{3}{c}{4-hop}
& \multicolumn{3}{c}{8-hop} \\
\cmidrule(lr){2-4}\cmidrule(lr){5-7}\cmidrule(lr){8-10}\cmidrule(lr){11-13}
& LB & Time & Gap
& LB & Time & Gap
& LB & Time & Gap
& LB & Time & Gap \\
\midrule
 50  & 8958.65 & 0.64 & 6.23\%
     & 8951.04 & 0.70 & 6.37\%
     & 8955.10 & 0.85 & 6.15\%
     & 8995.99 & 0.79 & \textbf{5.61\%} \\
 75  & 12864.96 & 1.94 & 7.00\%
     & 12926.47 & 2.19 & 6.41\%
     & 12890.30 & 2.08 & 6.67\%
     & 12935.72 & 2.72 & \textbf{6.35\%} \\
100  & 15583.42 & 4.98 & \textbf{6.82\%}
     & 15498.78 & 4.24 & 7.38\%
     & 15484.65 & 5.26 & 7.41\%
     & 15456.56 & 5.79 & 7.56\% \\
200  & 20816.81 & 36.29 & \textbf{8.83\%}
     & 20770.05 & 32.81 & 9.00\%
     & 20672.88 & 40.99 & 9.48\%
     & 20683.41 & 48.17 & 9.37\% \\
300  & 31424.26 & 483.62 & \textbf{9.52\%}
     & 31164.89 & 290.98 & 10.04\%
     & 31047.29 & 312.28 & 10.49\%
     & 31182.41 & 459.56 & 10.15\% \\
400  & 45505.74 & 2066.06 & \textbf{9.82\%}
     & 44908.79 & 1446.56 & 10.88\%
     & 44499.48 & 1322.68 & 11.66\%
     & 44940.15 & 2103.80 & 10.92\% \\
500  & 53800.54 & 3072.64 & \textbf{11.12\%}
     & 53048.32 & 2768.78 & 12.16\%
     & 52523.67 & 2459.50 & 13.28\%
     & 52643.90 & 3203.49 & 12.81\% \\
\bottomrule
\end{tabular}
\end{adjustbox}
\end{table*}

\subsection{Paired Per-Instance Comparisons}
\label{sec:paired_appendix}

Table~\ref{tab:paired_main} complements the mean gaps of
Tables~\ref{tab:cvrp_3panel_iters_with_exact} and \ref{tab:test_time_refinement}
with paired per-instance statistics on the ten benchmark instances per size:
win/loss/tie counts, the mean gap difference with its standard error, and the
worst-case difference, for SAG-Sep+OS against the threshold readout, against
Onion-MBP, and against $\pi$-NeuralSEP$_2$+GC. The $\pi$-NeuralSEP$_2$+GC runs
used for the paired comparison were re-run after the OS runs on the same
hardware configuration (one L4 GPU, four CPU cores), so their mean gaps (3.84 /
4.58 / 5.89 / 7.76 / 8.73 / 10.77 / 12.91 / 20.29 / 26.04 for $N=50$ to $1000$)
differ slightly from the values in Table~\ref{tab:cvrp_3panel_iters_with_exact};
the threshold runs are those of Table~\ref{tab:test_time_refinement}. OS
improves on the threshold readout on 85 of 90 instances; on Onion-MBP on 71 of
90, losing on average only at $N=500$ (where a single instance accounts for the
$-2.26$ pp worst case) and tying on average at $N=1000$ ($+0.01$ pp, 3 of 10
wins); and on $\pi$-NeuralSEP$_2$+GC on 77 of 90, winning all 30 instances at
$N\ge 500$ with positive worst-case margins.

\begin{table}[t]
\centering
\caption{Paired per-instance comparison of SAG-Sep+OS on the ten benchmark instances per size. W/L/T counts instances on which OS attains a better/worse/equal root bound; $\Delta$ is the opponent's gap minus the OS gap in percentage points (positive favors OS), reported as mean $\pm$ standard error and as the worst case over the ten instances.}
\label{tab:paired_main}
\scriptsize
\setlength{\tabcolsep}{3pt}
\renewcommand{\arraystretch}{1.05}
\begin{adjustbox}{max width=\linewidth}
\begin{tabular}{r|rrr|rrr|rrr}
\toprule
& \multicolumn{3}{c|}{vs.\ SAG-Sep (threshold)} & \multicolumn{3}{c|}{vs.\ SAG-Sep+Onion-MBP} & \multicolumn{3}{c}{vs.\ $\pi$-NeuralSEP$_2$+GC} \\
$N$ & W/L/T & mean $\Delta\pm$SE & worst $\Delta$ & W/L/T & mean $\Delta\pm$SE & worst $\Delta$ & W/L/T & mean $\Delta\pm$SE & worst $\Delta$ \\
\midrule
50   & 10/0/0 & $+1.77\pm0.29$ & $+0.27$ & 9/1/0  & $+0.87\pm0.19$ & $-0.01$ & 10/0/0 & $+1.24\pm0.20$ & $+0.36$ \\
75   & 10/0/0 & $+2.11\pm0.51$ & $+0.65$ & 8/2/0  & $+0.37\pm0.11$ & $-0.14$ & 9/1/0  & $+0.77\pm0.25$ & $-0.71$ \\
100  & 10/0/0 & $+1.68\pm0.28$ & $+0.44$ & 10/0/0 & $+0.46\pm0.09$ & $+0.11$ & 8/2/0  & $+0.45\pm0.19$ & $-0.56$ \\
200  & 10/0/0 & $+1.49\pm0.15$ & $+0.90$ & 10/0/0 & $+0.40\pm0.09$ & $+0.08$ & 5/5/0  & $+0.14\pm0.14$ & $-0.41$ \\
300  & 10/0/0 & $+0.86\pm0.13$ & $+0.47$ & 9/1/0  & $+0.29\pm0.06$ & $-0.05$ & 6/4/0  & $+0.17\pm0.21$ & $-0.76$ \\
400  & 10/0/0 & $+0.79\pm0.09$ & $+0.34$ & 9/1/0  & $+0.21\pm0.09$ & $-0.44$ & 9/1/0  & $+1.58\pm0.31$ & $-0.23$ \\
500  & 8/2/0  & $+0.57\pm0.22$ & $-0.66$ & 6/4/0  & $-0.12\pm0.28$ & $-2.26$ & 10/0/0 & $+2.12\pm0.37$ & $+0.43$ \\
750  & 8/2/0  & $+0.39\pm0.14$ & $-0.32$ & 7/3/0  & $+0.22\pm0.14$ & $-0.36$ & 10/0/0 & $+3.27\pm0.34$ & $+2.14$ \\
1000 & 9/1/0  & $+0.78\pm0.19$ & $-0.00$ & 3/7/0  & $+0.01\pm0.17$ & $-0.85$ & 10/0/0 & $+4.79\pm0.22$ & $+3.80$ \\
\midrule
all  & 85/5/0 & & & 71/19/0 & & & 77/13/0 & & \\
\bottomrule
\end{tabular}
\end{adjustbox}
\end{table}

\subsection{Generalization Beyond the Benchmark}
\label{sec:generalization_appendix}

To test transfer beyond the NeuralSEP benchmark distribution, we evaluate the
frozen checkpoint, trained at $N=100$, on two XML parameter settings at unseen
sizes $N=500$, $750$, and $1000$ (ten instances per size, 5--10$\times$ the
training size) and on ten CVRPLIB X instances with $N$ between $524$ and $979$,
all under the same 3600-second root limit.
Table~\ref{tab:xml_generalization} reports the XML regimes and
Table~\ref{tab:xset_generalization} the X instances; all gaps are computed
against the best known solutions; the XML instances and their best known
solutions are included in the released repository (Appendix~\ref{sec:code_release}). OS wins all 60 paired comparisons against
$\pi$-NeuralSEP$_2$+GC on the XML regimes and 9 of 10 on the X instances
(losing X-n524-k153 by $0.36$ pp), and attains a lower gap than CVRPSEP on all
70 instances. On X-n524-k153 the demand per customer is unusually large
relative to the vehicle capacity (153 routes for 524 customers), so the useful
cuts are many small, disjoint subsets that the prefix-based decoder covers
only partially; the test-time cut-refinement layer of
Appendix~\ref{sec:code_release} compensates for this (gap $17.04\%$ on this
instance, against $19.40\%$ for OS and $19.04\%$ for $\pi$-NeuralSEP$_2$+GC),
so we did not pursue it further. The $\pi$-NeuralSEP$_2$+GC and CVRPSEP baselines are run from the
NeuralSEP code base (CVRPSEP with its RCC routines only), with the same
single-thread LP setting as in Table~\ref{tab:cvrp_3panel_iters_with_exact}.

\begin{table}[t]
\centering
\caption{Generalization to two XML parameter settings at unseen sizes, 5--10$\times$ the training size (ten instances per cell, 3600-second limit). Mean root gap (\%); W/T/L counts the paired per-instance comparisons of OS against each baseline. XML-A: central depot, random customers, many small and few large demands, medium routes. XML-B: corner depot, random customers, small high-variance demands, medium routes.}
\label{tab:xml_generalization}
\scriptsize
\setlength{\tabcolsep}{5pt}
\renewcommand{\arraystretch}{1.05}
\begin{tabular}{ll|ccc|cc}
\toprule
Regime & $N$ & OS & $\pi$-NeuralSEP$_2$+GC & CVRPSEP & OS vs.\ $\pi$-NeuralSEP$_2$+GC & OS vs.\ CVRPSEP \\
\midrule
XML-A & 500  & \textbf{8.37}  & 8.85  & 11.54 & 10/0/0 & 10/0/0 \\
XML-A & 750  & \textbf{13.47} & 15.47 & 18.25 & 10/0/0 & 10/0/0 \\
XML-A & 1000 & \textbf{17.35} & 20.77 & 22.35 & 10/0/0 & 10/0/0 \\
XML-B & 500  & \textbf{8.86}  & 11.78 & 16.53 & 10/0/0 & 10/0/0 \\
XML-B & 750  & \textbf{17.19} & 19.56 & 23.07 & 10/0/0 & 10/0/0 \\
XML-B & 1000 & \textbf{21.61} & 25.50 & 27.27 & 10/0/0 & 10/0/0 \\
\bottomrule
\end{tabular}
\end{table}

\begin{table}[t]
\centering
\caption{Root gap (\%) on ten CVRPLIB X instances drawn uniformly from $N\in[500,1001]$ (3600-second limit).}
\label{tab:xset_generalization}
\scriptsize
\setlength{\tabcolsep}{5pt}
\renewcommand{\arraystretch}{1.05}
\begin{tabular}{l|ccc}
\toprule
Instance & OS & $\pi$-NeuralSEP$_2$+GC & CVRPSEP \\
\midrule
X-n524-k153 & 19.40 & \textbf{19.04} & 23.49 \\
X-n573-k30  & \textbf{3.36}  & 5.14  & 5.81  \\
X-n613-k62  & \textbf{11.42} & 13.57 & 16.36 \\
X-n701-k44  & \textbf{12.54} & 16.24 & 19.49 \\
X-n716-k35  & \textbf{16.94} & 19.65 & 20.75 \\
X-n749-k98  & \textbf{19.14} & 20.49 & 26.00 \\
X-n766-k71  & \textbf{13.97} & 17.28 & 19.81 \\
X-n837-k142 & \textbf{20.37} & 23.96 & 26.42 \\
X-n936-k151 & \textbf{20.67} & 25.85 & 24.42 \\
X-n979-k58  & \textbf{23.34} & 28.23 & 27.06 \\
\midrule
Mean        & \textbf{16.11} & 18.94 & 20.96 \\
\bottomrule
\end{tabular}
\end{table}

\section{Code Release and Scope Beyond the Root Node}
\label{sec:code_release}

The code accompanying this paper is released at
\url{https://github.com/henryleo180/SAG-Sep}. The repository contains a C11
implementation of the separator (ONNX inference, root-only solver and a
library interface for host solvers), a Python reference implementation, the
trained checkpoint and its ONNX export, the training pipeline, and the
benchmark instances used in this paper. The exact-label set the checkpoint was
trained on is archived at \url{https://doi.org/10.5281/zenodo.23539579}. The \texttt{os} recipe reproduces the
SAG-Sep+OS row of Table~\ref{tab:test_time_refinement}; the C and Python implementations decode the
same candidates and differ only in floating-point details.

The repository also goes beyond the scope of the present paper. It ships branch-and-cut examples
in which SAG-Sep separates RCCs for the capacitated, asymmetric, open and
time-windowed VRP and for the capacitated minimum spanning tree problem, a
branch-price-and-cut integration on top of RouteOpt~\citep{you2025routeopt}
for the CVRP and the VRPTW, and a training-free test-time cut-refinement layer (TTS)
for the rounding-driven states that dominate beyond the root node
(Section~\ref{sec:limitations}); the branch-and-cut examples use the paper's
checkpoint, and the branch-price-and-cut integration ships a second checkpoint
fine-tuned from it on column-generation states. These
components are not part of the contributions claimed here and are evaluated
only preliminarily below. We report these preliminary results mainly to address
questions raised during the review about the framework's potential inside
branch-and-cut and branch-price-and-cut solvers; the main text still focuses
only on the root node, which is also the setting in which
NeuralSEP$_1$~\citep{kim2024neural} and NeuralSEP$_2$~\citep{sim2025test} are
developed and evaluated. A forthcoming extended version describes and evaluates
these components in detail.

\subsection{The Released Engine Run to Closure Against CVRPSEP}
\label{sec:engine_vs_cvrpsep}

The main tables stop at a fixed root time limit and compare separators by the
bound they reach within it. Table~\ref{tab:engine_closure} instead reports the
released C engine with its full recipe (OS followed by TTS) and the CVRPSEP separator, both run in the same root cutting-plane
loop to closure, that is, until the separator proposes no admissible cut, with
tail-off stopping disabled, on the ten random benchmark instances per size.
Both reach the same root bound at every size to within $0.002$ points of gap; the learned separator runs its
forward pass on the GPU while CVRPSEP runs on the CPU, so the time columns are
a systems comparison rather than a like-for-like separator speed. Under this
protocol the learned engine is slower than CVRPSEP at $N=50$ and $N=75$,
and faster from $N=100$ on, by a factor of three at $N=200$. On a single CPU
core (no GPU) the engine needs 0.50, 6.9 and 67 seconds per instance at
$N=50$, $100$ and $200$, with the same bounds to within $0.002$ points, so the
crossover moves to $N=200$.

\begin{table}[t]
\centering
\caption{Released C engine (full recipe) versus CVRPSEP, both run to closure in the same root loop (ten random instances per size, means). Gap is with respect to the optimal solution. The time ratio is engine time over CVRPSEP time, computed from unrounded means.}
\label{tab:engine_closure}
\scriptsize
\setlength{\tabcolsep}{5pt}
\renewcommand{\arraystretch}{1.05}
\begin{tabular}{l|cc|cc|c}
\toprule
$N$ & CVRPSEP gap (\%) & Engine gap (\%) & CVRPSEP s/inst (CPU) & Engine s/inst (GPU) & Time ratio \\
\midrule
50  & 1.969 & 1.968 & 0.09 & 0.16 & 1.86$\times$ \\
75  & 2.768 & 2.768 & 0.63 & 0.74 & 1.19$\times$ \\
100 & 4.539 & 4.538 & 2.93 & 2.20 & 0.75$\times$ \\
200 & 6.281 & 6.283 & 95.8 & 32.1 & 0.34$\times$ \\
\bottomrule
\end{tabular}
\end{table}

\subsection{Preliminary Complete Branch-and-Cut}
\label{sec:bc_fulltree}

Table~\ref{tab:bc_fulltree} reports complete branch-and-cut runs, with a
two-hour limit per run, on the five $N=75$ and $N=100$ benchmark instances
that CVRPSEP solves within that limit, so the table compares solve times on
these instances rather than solve rates. Four separation arms are compared in
the same research harness: OS followed by CVRPSEP once OS stalls; OS with the
test-time cut-refinement layer and no call to CVRPSEP; $\pi$-NeuralSEP$_2$+GC
followed by CVRPSEP, run in its own code base; and CVRPSEP alone. On every
instance the four root bounds agree to within $0.06$ percentage points, so the
differences in time and tree size come mainly from the tree search. The refinement layer carries all five complete
trees without any CVRPSEP call, and on random-18 it is the fastest
configuration with the smallest tree; CVRPSEP is the fastest on random-13 and
random-37, and OS+CVRPSEP on random-21. The released examples reproduce the CVRPSEP and
refinement-layer arms; the other two arms come from the research harness.

\begin{table}[t]
\centering
\caption{Preliminary complete branch-and-cut on the five $N{=}75/100$ benchmark instances that CVRPSEP solves within the two-hour limit (two-index formulation with lazy RCC callbacks, 2-hour limit, one LP thread, cutoff at the optimum plus one, so the runs measure the time to prove optimality, user cuts at the first 31 nodes, integer solutions checked exactly; one node type with a B200 GPU, not the L4 machine of the main tables). Each cell gives wall-clock seconds / tree nodes; TL marks the time limit, with the remaining gap to the optimum (\%) in parentheses. OS+CVRPSEP: OS at the root until it stalls, then CVRPSEP; both separate in the tree. TTS: OS with a training-free test-time cut-refinement layer (a research version of the layer in the released code) at the root and in the tree, without any call to CVRPSEP. $\pi$-NeuralSEP$_2$+GC+CVRPSEP runs in its own code base and is included as a reference; the CVRPSEP column is an earlier run of the same harness.}
\label{tab:bc_fulltree}
\scriptsize
\setlength{\tabcolsep}{4pt}
\renewcommand{\arraystretch}{1.05}
\begin{adjustbox}{max width=\linewidth}
\begin{tabular}{l|cccc}
\toprule
Instance & OS+CVRPSEP & TTS (no CVRPSEP) & $\pi$-NeuralSEP$_2$+GC+CVRPSEP & CVRPSEP \\
\midrule
random-13-X-n75  & 5 / 376              & 12 / 425                  & 34 / 1{,}515          & \textbf{4} / 226 \\
random-21-X-n75  & \textbf{22} / 2{,}832 & 32 / 4{,}134             & 83 / 5{,}134          & 35 / 3{,}804 \\
random-37-X-n75  & 78 / 13{,}319        & 113 / 19{,}323            & 179 / 8{,}725         & \textbf{53} / 8{,}427 \\
random-18-X-n100 & 346 / 31{,}215       & \textbf{122} / 11{,}046   & 609 / 44{,}518        & 1185 / 114{,}243 \\
random-34-X-n100 & 2255 / 208{,}434     & \textbf{1980} / 180{,}459 & TL (0.51)             & 2100 / 186{,}129 \\
\bottomrule
\end{tabular}
\end{adjustbox}
\end{table}

\subsection{Capacity and Rounding Parts of Exact RCC Violations}
\label{sec:rcc_decomposition}

The violation of an RCC splits into a capacity part, $2d(S)/C-\bar x(\delta(S))$,
and a rounding part, $2(\lceil d(S)/C\rceil-d(S)/C)$. Table~\ref{tab:rcc_decomposition}
classifies the 25{,}411 exact cuts found in complete branch-and-cut runs with
exact separation on 30 benchmark instances ($N=50$, $75$, $100$) by the phase
in which they were found. Early root cuts are overwhelmingly capacity-dominated
and large; by the second half of the root loop, and at every tree node, the
capacity part is non-positive for essentially all cuts and the remaining
violations are small and rounding-driven. The transition therefore happens
within the root loop through repeated separation; branching adds nothing
qualitatively new, which is why separators tuned to the early, batch-oriented
root regime need a different mechanism beyond it (Section~\ref{sec:limitations}).

\begin{table}[t]
\centering
\caption{Capacity and rounding parts of the violation of exact RCCs in complete branch-and-cut runs with exact separation (30 random benchmark instances, $N=50$, $75$, $100$; 25{,}411 cuts). A cut is capacity-dominated if its capacity part exceeds its rounding part; violations are in the units of $\bar x(\delta(S))$. Root round $r$ of an instance's $R$ rounds falls in the first tenth if $r/R\le 0.1$ and in the first half if $r/R\le 0.5$; none of the 6{,}424 late-root and 8{,}453 tree cuts is capacity-dominated.}
\label{tab:rcc_decomposition}
\scriptsize
\setlength{\tabcolsep}{5pt}
\renewcommand{\arraystretch}{1.05}
\begin{tabular}{l|rrrr}
\toprule
Phase & Cuts & Mean violation & Capacity-dominated (\%) & Capacity part $>0$ (\%) \\
\midrule
Root, first 10\% of rounds & 2{,}032 & 7.88 & 86.1 & 96.9 \\
Root, 10--50\% of rounds   & 8{,}502 & 2.34 & 22.5 & 48.8 \\
Root, last 50\% of rounds  & 6{,}424 & 0.24 & 0.0  & 0.5 \\
Tree nodes                 & 8{,}453 & 0.13 & 0.0  & 0.2 \\
\bottomrule
\end{tabular}
\end{table}

\subsection{Transfer Across Problem Classes: The Released Examples}
\label{sec:variant_bc}

The released branch-and-cut examples run the engine zero-shot, with the
paper's checkpoint and its full recipe (OS followed by TTS), on four VRP variants and on the
capacitated minimum spanning tree problem, against CVRPSEP in the same driver
(Tables~\ref{tab:variant_bc_vrp} and \ref{tab:variant_bc_cmst}; one CPU thread,
1800-second limit, one run per instance). On the capacitated, asymmetric and
time-windowed variants (30 instances) the two separators reach identical root
bounds and both solve all 30; on the hardest VRPTW instance (rc101\_100) the
engine finishes in 478 seconds against 1785 for CVRPSEP. On the open VRP the
root bounds agree on 3 of 7 instances and differ by at most 0.004 points of gap
on the other 4; each separator solves the same 3, and on the 4 unsolved the
engine ends with the higher bound on 3. The engine reaches these root bounds
with 50--76\% of the root cuts CVRPSEP adds, but its CPU root loop is slower,
so total times split evenly: among the runs that both separators solve and in
which the slower one takes more than one second, it is faster on 6, slower on 6
and level on 1 (differences within 10\% counted as level). On the CMST,
a different problem class with the capacity cut as its only family and no CMST
training, the root bounds differ on 19 of 20 instances: the engine is higher on
13, by up to 5.8 points of gap at tight capacity ($Q\in\{200,400\}$), and lower
on 6 by at most 0.44 points; it solves 3 instances against 1 within the limit,
a count that changes with the tree-commitment setting of the driver. These are
runs of the released examples rather than a benchmark: one seed, small
instance sets, and timings from shared cluster nodes.

\begin{table}[t]
\centering
\caption{Released branch-and-cut examples on four VRP variants: CVRPSEP versus SAG-Sep, one run per instance. Both arms use the same C driver, formulation, fleet ($k$ routes, fixed by an equality), root and tree schedule, exact rescoring and admission rule, and differ only in the source of rounded capacity cuts: the root loop runs to closure (at most 100 rounds of at most 100 cuts), user cuts are separated at the first 15 tree nodes, and every integer solution is checked exactly; one CPU thread, 1800-second limit, relative MIP gap $10^{-4}$, the MIP solver's own cuts and heuristics at their defaults. SAG-Sep is the released C engine with its full recipe, including the test-time cut-refinement layer, and the paper's checkpoint, applied zero-shot through each example's directed or split-depot aggregation, with its forward pass on the CPU. Gap: root bound relative to the reference value (\%); Root and Total: seconds; Nodes: tree nodes; TL: time limit, with the remaining gap of the final lower bound (\%) in parentheses. References are proved optima except those marked $^\dagger$ (best known). Counts in the text use unrounded bounds. Instances: four $N{=}50$ random benchmark instances of this paper (CVRP); the Fischetti--Toth--Vigo asymmetric instances (ACVRP); seven Christofides--Mingozzi--Toth instances as open VRPs (OVRP); Solomon series 1 and 2 with 25, 50 and 100 customers, distances truncated to one decimal (VRPTW).}
\label{tab:variant_bc_vrp}
\scriptsize
\setlength{\tabcolsep}{3pt}
\renewcommand{\arraystretch}{1.0}
\begin{adjustbox}{max width=\linewidth}
\begin{tabular}{lrrr|rrrr|rrrr}
\toprule
 & & & & \multicolumn{4}{c|}{CVRPSEP} & \multicolumn{4}{c}{SAG-Sep (released engine, OS+TTS)} \\
Instance & $n$ & $k$ & Ref. & Gap & Root & Total & Nodes & Gap & Root & Total & Nodes \\
\midrule
\multicolumn{12}{l}{\textit{CVRP}} \\
random-4-X-n50 & 50 & 4 & 7684 & 1.37 & 0.02 & 1.4 & 508 & 1.37 & 0.23 & 4.9 & 1{,}080 \\
random-8-X-n50 & 50 & 3 & 6733 & 0.90 & 0.02 & 0.4 & 137 & 0.90 & 0.18 & 0.8 & 111 \\
random-12-X-n50 & 50 & 4 & 8317 & 0.89 & 0.06 & 13.8 & 603 & 0.89 & 0.26 & 4.7 & 345 \\
random-24-X-n50 & 50 & 3 & 6686 & 2.15 & 0.02 & 7.0 & 1{,}494 & 2.15 & 0.20 & 3.2 & 2{,}046 \\
\midrule
\multicolumn{12}{l}{\textit{ACVRP}} \\
A034-02f & 33 & 2 & 1406 & 2.56 & 0.01 & 0.7 & 40 & 2.56 & 0.05 & 1.2 & 29 \\
A036-03f & 35 & 3 & 1644 & 0.74 & 0.03 & 0.3 & 16 & 0.74 & 0.12 & 0.6 & 20 \\
A039-03f & 38 & 3 & 1654 & 0.00 & 0.01 & 0.0 & 1 & 0.00 & 0.08 & 0.1 & 1 \\
A045-03f & 44 & 3 & 1740 & 0.00 & 0.04 & 0.3 & 1 & 0.00 & 0.15 & 0.4 & 1 \\
A048-03f & 47 & 3 & 1891 & 0.32 & 0.02 & 0.2 & 1 & 0.32 & 0.11 & 0.5 & 1 \\
A056-03f & 55 & 3 & 1739 & 1.24 & 0.02 & 0.7 & 87 & 1.24 & 0.16 & 1.8 & 20 \\
A065-03f & 64 & 3 & 1974 & 1.41 & 0.08 & 3.7 & 1 & 1.41 & 0.19 & 2.6 & 1 \\
A071-03f & 70 & 3 & 2054 & 1.36 & 0.02 & 0.1 & 1 & 1.36 & 0.18 & 0.4 & 1 \\
\midrule
\multicolumn{12}{l}{\textit{OVRP}} \\
O1 & 50 & 5 & 416.063 & 1.72 & 0.02 & 16.0 & 2{,}128 & 1.72 & 0.18 & 11.5 & 1{,}069 \\
O2 & 75 & 10 & 567.14$^\dagger$ & 3.83 & 0.15 & TL (2.35) & 45{,}007 & 3.83 & 0.90 & TL (2.41) & 44{,}136 \\
O3 & 100 & 8 & 639.744 & 1.22 & 0.18 & 313.3 & 29{,}188 & 1.22 & 0.64 & 123.2 & 11{,}756 \\
O4 & 150 & 12 & 733.13$^\dagger$ & 1.36 & 1.30 & TL (0.69) & 34{,}448 & 1.37 & 3.49 & TL (0.66) & 28{,}912 \\
O5 & 199 & 16 & 896.08$^\dagger$ & 6.03 & 9.97 & TL (5.73) & 2{,}563 & 6.03 & 14.69 & TL (5.61) & 13{,}293 \\
O6 & 120 & 7 & 682.12$^\dagger$ & 5.77 & 3.27 & TL (4.74) & 15{,}123 & 5.77 & 4.15 & TL (4.32) & 21{,}298 \\
O7 & 100 & 10 & 534.239 & 1.46 & 0.79 & 72.5 & 986 & 1.46 & 2.48 & 69.4 & 1{,}644 \\
\midrule
\multicolumn{12}{l}{\textit{VRPTW}} \\
c101\_25 & 25 & 3 & 191.3 & 0.00 & 0.00 & 0.0 & 1 & 0.00 & 0.01 & 0.0 & 1 \\
c101\_50 & 50 & 5 & 362.4 & 0.00 & 0.00 & 0.0 & 1 & 0.00 & 0.04 & 0.1 & 1 \\
c101\_100 & 100 & 10 & 827.3 & 0.00 & 0.02 & 0.1 & 1 & 0.00 & 0.26 & 0.3 & 1 \\
r101\_25 & 25 & 8 & 617.1 & 0.00 & 0.00 & 0.0 & 1 & 0.00 & 0.01 & 0.0 & 1 \\
r101\_50 & 50 & 12 & 1044 & 0.93 & 0.01 & 0.0 & 1 & 0.93 & 0.02 & 0.0 & 1 \\
r101\_100 & 100 & 20 & 1637.7 & 1.31 & 0.01 & 0.1 & 1 & 1.31 & 0.09 & 0.8 & 1 \\
rc101\_25 & 25 & 4 & 461.1 & 11.14 & 0.00 & 0.1 & 171 & 11.14 & 0.02 & 0.2 & 171 \\
rc101\_50 & 50 & 8 & 944 & 13.48 & 0.01 & 5.6 & 3{,}867 & 13.48 & 0.08 & 9.8 & 8{,}954 \\
rc101\_100 & 100 & 15 & 1619.8 & 15.83 & 0.08 & 1784.6 & 399{,}365 & 15.83 & 0.78 & 478.4 & 109{,}372 \\
c201\_25 & 25 & 2 & 214.7 & 0.00 & 0.00 & 0.0 & 1 & 0.00 & 0.01 & 0.0 & 1 \\
c201\_50 & 50 & 3 & 360.2 & 0.00 & 0.00 & 0.0 & 1 & 0.00 & 0.01 & 0.0 & 1 \\
c201\_100 & 100 & 3 & 589.1 & 0.00 & 0.01 & 0.0 & 1 & 0.00 & 0.03 & 0.0 & 1 \\
r201\_25 & 25 & 4 & 463.3 & 2.84 & 0.00 & 0.0 & 1 & 2.84 & 0.01 & 0.0 & 1 \\
r201\_50 & 50 & 6 & 791.9 & 2.27 & 0.01 & 0.1 & 1 & 2.27 & 0.02 & 0.1 & 1 \\
r201\_100 & 100 & 8 & 1143.2 & 7.15 & 0.05 & 12.4 & 1{,}272 & 7.15 & 0.08 & 14.6 & 958 \\
rc201\_25 & 25 & 3 & 360.2 & 0.39 & 0.00 & 0.0 & 1 & 0.39 & 0.01 & 0.0 & 1 \\
rc201\_50 & 50 & 5 & 684.8 & 6.32 & 0.01 & 0.1 & 66 & 6.32 & 0.03 & 0.2 & 42 \\
rc201\_100 & 100 & 9 & 1261.8 & 9.30 & 0.05 & 13.6 & 1{,}401 & 9.30 & 0.12 & 15.6 & 1{,}703 \\
\bottomrule
\end{tabular}
\end{adjustbox}
\end{table}

\begin{table}[t]
\centering
\caption{Released branch-and-cut example on the capacitated minimum spanning tree problem (OR-Library \texttt{capmst} instances; $n$ terminals, branch capacity $Q$; all references are proved optima). The model is the natural undirected edge formulation with the capacity cut as its only dynamic family. Its connectivity members ($\lceil d(S)/Q\rceil=1$) are separated exactly by maximum flow in both arms; sets with $\lceil d(S)/Q\rceil\ge 2$ come from CVRPSEP's CVRP heuristics or from SAG-Sep zero-shot (the engine, recipe and checkpoint of Table~\ref{tab:variant_bc_vrp}, no CMST training), are rescored exactly against the CMST point, and enter in inner and/or cut-set form. Driver settings, columns and limits are those of Table~\ref{tab:variant_bc_vrp}.}
\label{tab:variant_bc_cmst}
\scriptsize
\setlength{\tabcolsep}{3pt}
\renewcommand{\arraystretch}{1.0}
\begin{adjustbox}{max width=\linewidth}
\begin{tabular}{lrrr|rrrr|rrrr}
\toprule
 & & & & \multicolumn{4}{c|}{CVRPSEP} & \multicolumn{4}{c}{SAG-Sep (released engine, OS+TTS)} \\
Instance & $n$ & $Q$ & Ref. & Gap & Root & Total & Nodes & Gap & Root & Total & Nodes \\
\midrule
cm50r1 & 49 & 200 & 1098 & 11.14 & 0.16 & TL (9.47) & 229{,}865 & 8.65 & 1.88 & TL (6.47) & 302{,}447 \\
cm50r2 & 49 & 200 & 974 & 16.31 & 0.15 & TL (13.76) & 262{,}483 & 11.84 & 1.72 & TL (8.83) & 335{,}159 \\
cm50r3 & 49 & 200 & 1186 & 11.52 & 0.25 & TL (10.20) & 194{,}989 & 8.03 & 2.03 & TL (6.07) & 220{,}994 \\
cm50r4 & 49 & 200 & 800 & 11.50 & 0.06 & TL (6.50) & 272{,}333 & 9.99 & 0.96 & TL (5.25) & 282{,}248 \\
cm50r5 & 49 & 200 & 928 & 15.16 & 0.11 & TL (11.31) & 187{,}211 & 10.76 & 2.48 & TL (8.19) & 297{,}886 \\
cm50r1 & 49 & 400 & 679 & 12.02 & 0.08 & TL (10.31) & 160{,}868 & 7.69 & 0.66 & TL (4.86) & 266{,}054 \\
cm50r2 & 49 & 400 & 631 & 13.54 & 0.09 & TL (11.09) & 156{,}504 & 10.32 & 0.73 & TL (6.66) & 307{,}304 \\
cm50r3 & 49 & 400 & 732 & 13.19 & 0.05 & TL (11.07) & 191{,}722 & 7.39 & 0.68 & TL (4.78) & 407{,}596 \\
cm50r4 & 49 & 400 & 564 & 8.52 & 0.07 & TL (3.72) & 246{,}341 & 7.84 & 0.41 & TL (3.01) & 217{,}671 \\
cm50r5 & 49 & 400 & 611 & 10.49 & 0.08 & TL (8.35) & 114{,}945 & 7.96 & 0.71 & TL (4.91) & 315{,}654 \\
cm50r1 & 49 & 800 & 495 & 5.00 & 0.04 & TL (1.62) & 209{,}968 & 4.79 & 0.16 & 899.7 & 119{,}499 \\
cm50r2 & 49 & 800 & 513 & 10.77 & 0.03 & TL (7.02) & 179{,}280 & 11.21 & 0.17 & TL (5.65) & 252{,}559 \\
cm50r3 & 49 & 800 & 532 & 6.07 & 0.06 & TL (3.38) & 208{,}358 & 5.64 & 0.28 & TL (2.26) & 194{,}399 \\
cm50r4 & 49 & 800 & 471 & 5.94 & 0.05 & 22.6 & 9{,}494 & 5.94 & 0.24 & 6.1 & 545 \\
cm50r5 & 49 & 800 & 492 & 7.29 & 0.03 & TL (3.25) & 145{,}122 & 6.71 & 0.23 & TL (0.81) & 630{,}510 \\
cm100r1 & 99 & 800 & 182 & 1.05 & 0.93 & TL (0.55) & 48{,}609 & 1.24 & 2.80 & 241.1 & 11{,}071 \\
cm100r2 & 99 & 800 & 179 & 1.74 & 1.35 & TL (0.56) & 45{,}197 & 1.84 & 3.52 & TL (1.12) & 37{,}352 \\
cm100r3 & 99 & 800 & 175 & 1.67 & 1.41 & TL (1.14) & 43{,}060 & 1.71 & 1.96 & TL (1.14) & 47{,}700 \\
cm100r4 & 99 & 800 & 183 & 1.64 & 0.65 & TL (1.09) & 36{,}776 & 1.64 & 2.37 & TL (1.09) & 37{,}617 \\
cm100r5 & 99 & 800 & 186 & 1.88 & 1.67 & TL (1.61) & 34{,}764 & 1.95 & 2.37 & TL (1.08) & 40{,}232 \\
\bottomrule
\end{tabular}
\end{adjustbox}
\end{table}

\end{document}